\newcommand{\domain}{\Theta}
\renewcommand{\th}{h'}
\newcommand{\hk}{h^k}
\newcommand{\yk}{y^k}
\newcommand{\pk}{p^k}
\newcommand{\tw}{w_{\mathrm{trunc}}}
\newcommand{\rw}{w_{\mathrm{recurr}}}
\newcommand{\todo}[1]{}
\renewcommand{\todo}[1]{{\color{red} TODO: {#1}}}
\newcommand{\reals}{{\mbox{\bf R}}}
\newcommand{\eps}{\varepsilon}
\newcommand{\pder}[2]{\frac{\partial#1}{\partial#2}} 
\newcommand{\der}[2]{\frac{d}{d#2}} 
\newcommand{\grad}{\nabla}
\newcommand{\diag}{\mathop{\bf diag}}
\newcommand{\1}[1]{\ensuremath{\mathbb{I}} \left[#1\right]} 
\newcommand{\simiid}{\overset{\textrm{i.i.d.}}{\sim}}
\newcommand{\normal}[2]{\mathcal N \left(#1, #2\right)}
\def\set#1{\left\{ #1 \right\}}
\def\abs#1{\left| #1 \right|}
\def\1#1{1\left[ #1 \right]} 
\def\norm#1{\left\| #1 \right\|}
\def\paren#1{\left( #1 \right)}     
\def\brack#1{\left[ #1 \right]}     
\newtheorem{thm}{Theorem}
\newtheorem*{thm*}{Theorem}
\newtheorem{prop}{Proposition}
\newtheorem*{prop*}{Proposition}
\newtheorem{lemma}{Lemma}
\newtheorem{defn}{Definition}
\title{Stable Recurrent Models}
\author{
  John Miller\thanks{Email: miller\_john@berkeley.edu} \and
  Moritz Hardt\thanks{Email: hardt@berkeley.edu}}
\begin{document}
\maketitle

\begin{abstract}

Stability is a fundamental property of dynamical systems, yet to this date it
has had little bearing on the practice of recurrent neural networks. In this
work, we conduct a thorough investigation of stable recurrent models.
Theoretically, we prove stable recurrent neural networks are well approximated
by feed-forward networks for the purpose of both inference and training by
gradient descent. Empirically, we demonstrate stable recurrent models often
perform as well as their unstable counterparts on benchmark sequence tasks.
Taken together, these findings shed light on the effective power of recurrent
networks and suggest much of sequence learning happens, or can be made to
happen, in the stable regime. Moreover, our results help to explain why in many
cases practitioners succeed in replacing recurrent models by feed-forward
models.

\end{abstract}

\newcommand{\remove}[1]{}

\section{Introduction}
Recurrent neural networks are a popular modeling choice for solving sequence
learning problems arising in domains such as speech recognition and natural
language processing. At the outset, recurrent neural networks are non-linear
dynamical systems commonly trained to fit sequence data via some variant of
gradient descent. 

Stability is of fundamental importance in the study of dynamical system.
Surprisingly, however, stability has had little impact on the practice of
recurrent neural networks.  Recurrent models trained in practice do not satisfy
stability in an obvious manner, suggesting that perhaps training happens in a
chaotic regime. The difficulty of training recurrent models has compelled
practitioners to successfully replace recurrent models with non-recurrent,
feed-forward architectures.

This state of affairs raises important unresolved questions. 
\begin{center} 
\emph{Is sequence
modeling in practice inherently unstable? When and why are recurrent models
really needed?}
\end{center}

In this work, we shed light on both of these questions through a theoretical and
empirical investigation of stability in recurrent models.

We first prove stable recurrent models can be approximated by
feed-forward networks. In particular, not only are the models equivalent for
\emph{inference}, they are also equivalent for \emph{training} via gradient
descent.  While it is easy to contrive non-linear recurrent models that on some
input sequence cannot be approximated by feed-forward models, our result implies
such models are inevitably unstable. This means in particular they must have
exploding gradients, which is in general an impediment to learnibility via
gradient descent.

Second, across a variety of different sequence tasks, we show how
\emph{recurrent models can often be made stable without loss in performance}.
We also show models that are nominally unstable often operate in the stable
regime on the data distribution. Combined with our first result, these
observation helps to explain why an increasingly large body of empirical
research succeeds in replacing recurrent models with feed-forward models in
important applications, including translation~\cite{vaswani2017attention,
gehring2017convolutional}, speech synthesis~\cite{van2016wavenet}, and language
modeling~\cite{dauphin2017language}.  While stability does not always hold in
practice to begin with, it is often possible to generate a high-performing
stable model by \emph{imposing stability} during training. 

Our results also shed light on the effective representational properties of
recurrent networks trained in practice. In particular, stable models cannot have
long-term memory. Therefore, when stable and unstable models achieve similar
results, either the task does not require long-term memory, or the unstable
model does not have it.

\remove{At first glance, this approximation result should give one pause about
stable models. It easy to contrive non-linear recurrent models that on some
input sequences cannot be approximated by feed-forward models. But would such
recurrent models be trainable by gradient descent? After all, stable recurrent
models are precisely those models where gradient descent is expected to work.}

\remove{
A growing line of theoretical research has sought to understand when and why
recurrent models are trainable by gradient descent \cite{hardt2016gradient,
oymak2018stochastic}. Characterizing exactly which recurrent models are
learnable by gradient descent is a delicate task beyond the reach of current
theory. Consequently, many of these works identify \emph{stability} as a
natural criterion for training recurrent models. Roughly speaking, stability
is the requirement that the gradients of the training objective do not
\emph{explode} over time.

While stability offers convenient theoretical properties, recurrent models
trained in practice typically do not satisfy stability. This tension between
theory and practice raises an intriguing question for investigation:
\begin{center}
    \emph{Are stable recurrent models merely a theoretically convenient
        construct, or do they capture some aspect of practice?}
\end{center}
Answering this question requires fully understanding both the power and
limitations of stable recurrent models. Our work investigates both the
theoretical consequences of stability and the empirical performance of stable
recurrent models on variety of benchmark sequence tasks.
}

\subsection{Contributions} 
In this work, we make the following contributions.
\begin{enumerate}
\item 
We present a generic definition of stable recurrent models in terms of
non-linear dynamical systems and show how to ensure stability of several
commonly used models. Previous work establishes stability for vanilla
recurrent neural networks. We give new sufficient conditions for stability of
long short-term memory (LSTM) networks. These sufficient conditions come with
an efficient projection operator that can be used at training time to enforce
stability.
\item We prove, under the stability assumption, feed-forward networks can
approximate recurrent networks for purposes of both inference and training by
gradient descent. While simple in the case of inference, the training result
relies on non-trivial stability properties of gradient descent.
\item We conduct extensive experimentation on a variety of sequence benchmarks, 
show stable models often have comparable performance with their unstable
counterparts, and discuss when, if ever, there is an intrinsic performance price
to using stable models.
\end{enumerate}

\section{Stable Recurrent Models}
\label{sec:stable_rnns}
In this section, we define \emph{stable recurrent models} and illustrate the
concept for various popular model classes.  From a pragmatic perspective,
stability roughly corresponds to the criterion that the gradients of the
training objective do not \emph{explode} over time. Common recurrent models can
operate in both the stable and unstable regimes, depending on their parameters.
To study stable variants of common architectures, we give sufficient conditions
to ensure stability and describe how to efficiently enforce these conditions during
training.

\subsection{Defining Stable Recurrent Models}
A \emph{recurrent model} is a non-linear dynamical system given by a differentiable
\emph{state-transition map}
$\phi_w \colon \reals^n \times \reals^d \to \reals^n$,
parameterized by $w \in \reals^m.$ The hidden state $h_t\in\reals^n$ evolves in
discrete time steps according to the update rule
\begin{align}
    h_t = \phi_w(h_{t-1}, x_t)\,, \label{eq:full-system}
\end{align}
where the vector~$x_t \in \reals^d$ is an arbitrary input provided to the system
at time~$t$. This general formulation allows us to unify many examples of
interest. For instance, for a recurrent neural network, given
weight matrices $W$ and $U$, the state evolves according to
\begin{align*}
    h_t = \phi_{W, U}(h_{t-1}, x_t) = \tanh\paren{W h_{t-1} + U x_t}.
\end{align*}

Recurrent models are typically trained using some variant of gradient descent.
One natural---even if not strictly necessary---requirement for gradient descent
to work is that the gradients of the training objective do not explode over time.  
\emph{Stable recurrent models} are precisely
the class of models where the gradients cannot explode. They thus constitute a
natural class of models where gradient descent can be expected to work. 
In general, we define a stable recurrent model as follows.
\begin{defn}
    A recurrent model $\phi_w$ is \emph{stable} if there exists some $\lambda < 1$
    such that, for any weights $w \in \reals^m$, states $h, h' \in \reals^n$,
    and input $x \in \reals^d$,
    \begin{align} \label{def:stability}
        \norm{\phi_w(h, x) - \phi_w(h', x)} \leq \lambda \norm{h - h'}.
    \end{align}
\end{defn}
Equivalently, a recurrent model is stable if the map $\phi_w$ is
$\lambda$-contractive in $h$. If $\phi_w$ is $\lambda$-stable, then
$\norm{\nabla_h \phi_w(h, x)} < \lambda$, and for Lipschitz loss $p$,
$\norm{\grad_w p}$ is always bounded \cite{pascanu2013difficulty}.

Stable models are particularly well-behaved and well-justified from a theoretical
perspective. For instance, at present, only \emph{stable} linear dynamical
systems are known to be learnable via gradient descent
\cite{hardt2018gradient}. In unstable models, the gradients of the objective can
explode, and it is a delicate matter to even show that
gradient descent converges to a stationary point. The following proposition
offers one such example. The proof is provided in the appendix.
\begin{prop}
\label{prop:stability_grad_descent_counterex}
There exists an unstable system $\phi_w$ where gradient descent does not
converge to a stationary point, and $\norm{\grad_w p} \to\infty$ as the number
of iterations $N \to\infty$.
\end{prop}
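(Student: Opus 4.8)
The plan is to exhibit the simplest possible unstable model---a scalar linear system---together with a fixed training instance whose objective is a high-degree polynomial in the single parameter~$w$, and then show that constant-step-size gradient descent from a suitable initialization is repelled to infinity rather than attracted to a stationary point.

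Step one (the model). Take $n = d = 1$ and $\phi_w(h, x) = wh + x$. The Lipschitz constant of $\phi_w$ in $h$ is exactly $\abs{w}$, so the model is stable precisely when $\abs{w} < 1$ and unstable for $\abs{w} > 1$; this is exactly the regime the iterates will enter. Step two (the objective). Fix a horizon $T \ge 3$, initial state $h_0 = 0$, and input sequence $x_1 = 1$, $x_t = 0$ for $t \ge 2$. Unrolling the recurrence gives $h_T = w^{T-1}$, so with target $y$ and squared loss the training objective is
\[
p(w) = \tfrac12\paren{w^{T-1} - y}^2,
\]
a polynomial of degree $2(T-1)$ whose gradient $\grad_w p(w) = (T-1)\,w^{T-2}\paren{w^{T-1} - y}$ is a polynomial of degree $2T-3 \ge 3$. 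All stationary points (namely $w = 0$ and the real roots of $w^{T-1} = y$) are finite, so to prove non-convergence it suffices to show the iterates escape every bounded set.

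Step three (divergence). Run gradient descent $w_{N+1} = w_N - \eta\,\grad_w p(w_N)$ with a fixed step $\eta > 0$. For large $\abs{w}$ the leading term $(T-1)w^{2T-3}$ dominates, so the update map behaves like $w \mapsto -\eta(T-1)w^{2T-3}$, whose magnitude grows like $\abs{w}^{2T-3} \gg \abs{w}$. Concretely, I would fix a threshold $R = R(\eta, T, y)$ large enough that the triangle inequality yields $\abs{w_{N+1}} \ge \tfrac12\eta(T-1)\abs{w_N}^{2T-3}$ whenever $\abs{w_N} \ge R$, and large enough that the right-hand side exceeds $\abs{w_N}$; initializing with $\abs{w_0} \ge R$ then gives, by induction, $\abs{w_N} \ge R$ for all $N$ together with strict super-linear growth, hence $\abs{w_N} \to \infty$. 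Consequently the iterates converge to no finite stationary point, and $\norm{\grad_w p(w_N)}$, being asymptotic to $(T-1)\abs{w_N}^{2T-3}$, tends to infinity as $N \to \infty$.

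The main obstacle is Step three. Because the gradient is an odd-degree polynomial, each update flips the sign of $w_N$ and vastly overshoots any root, so one cannot argue that $w_N$ itself grows monotonically; the right object to track is $\abs{w_N}$, for which the sign flips are harmless. The crux is therefore to pin down the threshold $R$ and verify the domination inequality, i.e.\ that the degree-$(2T-3)$ term provably beats the sum of the lower-order terms $\eta(T-1)\abs{y}\abs{w_N}^{T-2}$ and the identity contribution $\abs{w_N}$; once $R$ is chosen to absorb these, the blow-up is self-sustaining and the instability of $\phi_w$ (which forces the gradient to be this high-degree polynomial) is exactly what drives the divergence.
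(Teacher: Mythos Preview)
Your proposal is correct and follows essentially the same approach as the paper: both exhibit a scalar linear system $h_t = wh_{t-1} + x_t$ whose squared-loss objective is a high-degree polynomial in the single parameter, and both argue that from an initialization with $\abs{w_0} > 1$ the gradient-descent iterates escape to infinity. Your choice of an impulse input ($x_1 = 1$, $x_t = 0$ thereafter) and a constant step size yields the cleaner closed form $h_T = w^{T-1}$ and a tidier divergence argument than the paper's constant-input, $\alpha_i = 1/i$ variant, but the underlying idea is identical.
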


\subsection{Examples of Stable Recurrent Models}
\label{sec:examples}
In this section, we provide sufficient conditions to ensure stability for
several common recurrent models. These conditions offer a way to require
learning happens in the stable regime-- after each iteration of gradient
descent, one imposes the corresponding stability condition via
projection.



\paragraph{Linear dynamical systems and recurrent neural networks.}
Given a Lipschitz, point-wise non-linearity $\rho$ and matrices
$W \in \reals^{n \times n}$ and $U \in \reals^{n \times d}$, 
the state-transition map for a recurrent neural
network (RNN) is
\begin{align*}
    h_t = \rho(W h_{t-1} + U x_t).
\end{align*}
If $\rho$ is the identity, then the system is a linear dynamical system.
\cite{jin1994absolute} show if $\rho$ is $L_\rho$-Lipschitz, then the model is
stable provided $\norm{W} < \frac{1}{L_\rho}$. Indeed, for any states $h, h'$,
and any $x$,
\begin{align*}
    \norm{\rho(W h + U x) - \rho(W h' + Ux)}
    \leq L_\rho\norm{W h + U x - W h' - Ux}
    \leq L_\rho\norm{W}\norm{h - h'}.
\end{align*}
In the case of a linear dynamical system, the model is stable provided $\norm{W}
< 1$. Similarly, for the 1-Lipschitz $\tanh$-nonlinearity, stability obtains
provided $\norm{W} < 1$. In the appendix, we verify the assumptions
required by the theorems given in the next section for this example.
Imposing this condition during training corresponds to projecting onto the
spectral norm ball.


\paragraph{Long short-term memory networks.}
Long Short-Term Memory (LSTM) networks are another commonly used class of sequence
models \cite{hochreiter1997long}. The state is a pair of vectors 
$s = (c, h) \in \reals^{2d}$, and the model is parameterized by eight
matrices, $W_\square \in \reals^{d \times d}$ and $U_\square \in \reals^{d \times n}$,
for $\square \in \set{i, f, o, z}$. The state-transition map
$\phi_{\mathrm{LSTM}}$ is given by
\begin{align*}
    f_t &= \sigma(W_f h_{t-1} + U_f x_t) \\
    i_t &= \sigma(W_i h_{t-1} + U_i x_t) \\
    o_t &= \sigma(W_o h_{t-1} + U_o x_t) \\
    z_t &= \tanh(W_z h_{t-1} + U_z x_t) \\
    c_t &= i_t \circ z_t + f_t \circ c_{t-1}  \\
    h_t &= o_t \cdot \tanh(c_t),
\end{align*}
where $\circ$ denotes elementwise multiplication, and $\sigma$ is the logistic
function. 

We provide conditions under which the iterated system $\phi_{\mathrm{LSTM}}^r =
\phi_{\mathrm{LSTM}} \circ \cdots \circ \phi_{\mathrm{LSTM}}$ is stable. Let
$\norm{f}_\infty = \sup_t \norm{f_t}_\infty$. If the weights $W_f, U_f$ and
inputs $x_t$ are bounded, then  $\norm{f}_\infty < 1$ since $\abs{\sigma} <
1$ for any finite input. This means the next state $c_t$ must ``forget'' a
non-trivial portion of $c_{t-1}$. We leverage this phenomenon to give sufficient
conditions for $\phi_{\mathrm{LSTM}}$ to be contractive in the $\ell_\infty$
norm, which in turn implies the iterated system $\phi_{\mathrm{LSTM}}^r$ is
contractive in the $\ell_2$ norm for $r=O(\log(d))$. 
Let $\norm{W}_\infty$ denote the induced $\ell_\infty$ matrix norm, which
corresponds to the maximum absolute row sum~$\max_i\sum_j|W_{ij}|.$
\begin{prop}
\label{prop:lstm-stability}
If $\norm{W_i}_\infty, \norm{W_o}_\infty < \paren{1 - \norm{f}_\infty}$,
$\norm{W_z}_\infty \leq (1/4)(1 - \norm{f}_\infty)$, $\norm{W_f}_\infty <
(1-\norm{f}_\infty)^2$, and $r = O(\log(d))$, then the iterated system
$\phi_{\mathrm{LSTM}}^r$ is stable.
\end{prop}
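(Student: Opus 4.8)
The plan is to execute the two-step strategy announced before the statement: first show that a single application of $\phi_{\mathrm{LSTM}}$ is a strict contraction in the $\ell_\infty$ norm, then convert this into an $\ell_2$ contraction for the $r$-fold composition by paying a dimension-dependent factor. The crucial preliminary observation is that one may work on a bounded, forward-invariant set. The hidden state always satisfies $\norm{h}_\infty < 1$ because $h = o \circ \tanh(c)$ with $\abs{o}, \abs{\tanh} < 1$; and from $c_t = i_t \circ z_t + f_t \circ c_{t-1}$ together with $\abs{i_t}, \abs{z_t} < 1$ we get $\norm{c_t}_\infty \leq 1 + \norm{f}_\infty \norm{c_{t-1}}_\infty$, so the ball $\set{\norm{c}_\infty \leq (1-\norm{f}_\infty)^{-1}}$ is forward invariant and attracts every trajectory. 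It therefore suffices to establish contraction on this set, and this is exactly where a uniform bound on $\norm{c}_\infty$ comes from -- the structural input supplied by the forget gate.

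Next I would collect the per-gate perturbation bounds. Fix states $(c,h)$ and $(c',h')$ and a common input $x$, and write $\Delta_c = \norm{c-c'}_\infty$, $\Delta_h = \norm{h-h'}_\infty$. Using that $\sigma$ is $\tfrac14$-Lipschitz, $\tanh$ is $1$-Lipschitz, and that $\norm{Wv}_\infty \leq \norm{W}_\infty \norm{v}_\infty$ for the maximum-absolute-row-sum norm, one gets $\norm{f-f'}_\infty \leq \tfrac14\norm{W_f}_\infty \Delta_h$, with the analogous $\tfrac14$-factor bounds for $i$ and $o$, and $\norm{z-z'}_\infty \leq \norm{W_z}_\infty \Delta_h$. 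I would then propagate these through the products defining $c_t$ and $h_t$ via the elementwise identity $u\circ v - u'\circ v' = u\circ(v-v') + (u-u')\circ v'$, bounding with $\abs{i},\abs{z},\abs{o},\abs{\tanh} < 1$, $\abs{f} \leq \norm{f}_\infty$, and the invariant-set bound $\norm{c'}_\infty \leq (1-\norm{f}_\infty)^{-1}$. This yields
\[
\norm{c_{\mathrm{new}} - c'_{\mathrm{new}}}_\infty \leq \paren{\norm{W_z}_\infty + \tfrac14\norm{W_i}_\infty + \frac{\norm{W_f}_\infty}{4(1-\norm{f}_\infty)}}\Delta_h + \norm{f}_\infty\,\Delta_c ,
\]
and the corresponding $h$-bound adds $\tfrac14\norm{W_o}_\infty\Delta_h$ on top of the right-hand side.

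Substituting the four hypotheses, each of the three terms in the $c$-coefficient is at most $\tfrac14(1-\norm{f}_\infty)$, and adding the $W_o$ term for the $h$-bound makes its coefficient of $\Delta_h$ strictly below $(1-\norm{f}_\infty)$. Since $\norm{s-s'}_\infty = \max(\Delta_c,\Delta_h)$, both bounds give $\norm{\phi_{\mathrm{LSTM}}(s)-\phi_{\mathrm{LSTM}}(s')}_\infty \leq \lambda_\infty \norm{s-s'}_\infty$ with $\lambda_\infty = (\text{coefficient of }\Delta_h) + \norm{f}_\infty < 1$, the strict slack coming precisely from the strict inequalities in the hypotheses on $W_i, W_o, W_f$. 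Iterating, $\phi_{\mathrm{LSTM}}^r$ is $\lambda_\infty^r$-contractive in $\ell_\infty$; and since the state lives in $\reals^{2d}$, the equivalence $\norm{v}_2 \leq \sqrt{2d}\,\norm{v}_\infty \leq \sqrt{2d}\,\norm{v}_2$ gives $\norm{\phi_{\mathrm{LSTM}}^r(s)-\phi_{\mathrm{LSTM}}^r(s')}_2 \leq \sqrt{2d}\,\lambda_\infty^r\,\norm{s-s'}_2$. This is an $\ell_2$ contraction as soon as $\sqrt{2d}\,\lambda_\infty^r < 1$, i.e. $r > \tfrac{\log(2d)}{2\log(1/\lambda_\infty)} = O(\log d)$.

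I expect the main obstacle to be the first and third steps rather than the norm conversion. The cross term $(f-f')\circ c'$ is only controllable with a uniform bound on the cell state, so the argument genuinely needs the forward-invariant set, and one must be careful that the stability being established holds on this reachable region rather than literally on all of $\reals^{2d}$ (where $\norm{c'}_\infty$ is unbounded and no contraction can hold). The other delicate point is purely bookkeeping: tracking how the four independent gate perturbations recombine through the elementwise products and verifying that the hypotheses were chosen so the accumulated coefficient lands strictly below $(1-\norm{f}_\infty)$, leaving just enough room for $\lambda_\infty < 1$.
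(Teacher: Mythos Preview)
Your proposal is correct and follows essentially the same route as the paper: establish the forward-invariant bound $\norm{c}_\infty \leq (1-\norm{f}_\infty)^{-1}$, derive the per-gate Lipschitz estimates, propagate them through the elementwise products to obtain an $\ell_\infty$ contraction with coefficient strictly below $1-\norm{f}_\infty + \norm{f}_\infty = 1$, and then use norm equivalence to pass to an $\ell_2$ contraction after $O(\log d)$ iterations. Your write-up is in fact slightly more careful than the paper's in two respects---you make explicit that contraction is claimed only on the reachable (forward-invariant) region rather than all of $\reals^{2d}$, and you track the dimension constant as $\sqrt{2d}$ rather than $\sqrt{d}$---but these are refinements of the same argument, not a different approach.
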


The proof is given in the appendix. 
The conditions given in Proposition~\ref{prop:lstm-stability} are fairly
restrictive.  Somewhat surprisingly we show in the experiments models
satisfying these stability conditions still achieve good performance on a number
of tasks.  We leave it as an open problem to find different parameter regimes
where the system is stable, as well as resolve whether the original system
$\phi_{\mathrm{LSTM}}$ is stable. Imposing these conditions during training and
corresponds to simple row-wise normalization of the weight matrices and inputs.
More details are provided in Section~\ref{sec:experiments} and the appendix.

%
%

\section{Stable Recurrent Models Have Feed-forward Approximations}
\label{sec:ffnn_approximation}
In this section, we prove stable recurrent models can be well-approximated by
feed-forward networks for the purposes of both inference and training by
gradient descent. From a memory perspective, stable recurrent models are
\emph{equivalent} to feed-forward networks---both models use the same amount of
context to make predictions.
This equivalence has important consequences for sequence modeling in practice.
When a stable recurrent model achieves satisfactory performance on some task, a
feed-forward network can achieve similar performance. Consequently, if sequence
learning in practice is inherently stable, then recurrent models may not be
necessary. Conversely, if feed-forward models cannot match the
performance of recurrent models, then sequence learning in practice is in the
unstable regime.

\subsection{Truncated recurrent models}
For our purposes, the salient distinction between a recurrent and feed-forward
model is the latter has \emph{finite-context}. Therefore, we say a model is
\emph{feed-forward} if the prediction made by the model at step $t$ is a
function only of the inputs $x_{t-k}, \dots, x_t$ for some finite $k$.

While there are many choices for a feed-forward approximation, we consider the
simplest one---truncation of the system to some finite context $k$. In other
words, the feed-forward approximation moves over the input sequence with a
sliding window of length~$k$ producing an output every time the sliding window
advances by one step.  Formally, for context length~$k$ chosen in advance, we
define the \emph{truncated model} via the update rule
\begin{align}
    \hk_t = \phi_w(\hk_{t-1}, x_t), \quad \hk_{t-k} = 0\,. \label{eq:truncated-system}
\end{align}
Note that $\hk_t$ is a function only of the previous~$k$ inputs $x_{t-k}, \dots,
x_t$. 
While this definition is perhaps an abuse of the term ``feed-forward'', the
truncated model can be implemented as a standard autoregressive, depth-$k$
feed-forward network, albeit with significant weight sharing. 

Let $f$ denote a prediction function that maps a state~$h_t$ to outputs~$f(h_t)
= y_t$. Let $\yk_t$ denote the predictions from the truncated model. To simplify
the presentation, the prediction function $f$ is not parameterized.  This is
without loss of generality because it is always possible to fold the parameters
into the system $\phi_w$ itself. In the sequel, we study $\norm{y_t - \yk_t}$
both during and after training.

\subsection{Approximation during inference}
Suppose we train a full recurrent model $\phi_w$ and obtain a prediction $y_t$.
For an appropriate choice of context $k$, the truncated model makes essentially 
the same prediction $\yk_t$ as the full recurrent model. To show this result, we
first control the difference between the hidden states of both models.
\begin{lemma}
    \label{lem:trunc-hidden}
    Assume $\phi_w$ is $\lambda$-contractive in $h$ and $L_x$-Lipschitz in $x$. Assume
    the input sequence $\norm{x_t} \leq B_x$ for all $t$. If the truncation
    length $k \geq \log_{1/\lambda}\paren{\frac{L_x B_x}{(1-\lambda)\eps}}$, then the
    difference in hidden states $\norm{h_t - \hk_t} \leq \eps$.
\end{lemma}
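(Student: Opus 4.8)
The plan is to exploit the fact that the full and truncated trajectories are driven by the \emph{same} inputs $x_{t-k+1}, \dots, x_t$ over the truncation window and differ only in their state at the reset time $t-k$: the full model carries some state $h_{t-k}$, whereas the truncated model is reset to $\hk_{t-k} = 0$. The entire discrepancy thus originates from this single initial gap, which contraction damps geometrically. So the proof splits into two pieces: telescoping the contraction over the window, and bounding the magnitude of $h_{t-k}$.

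First I would telescope the contraction estimate. For each step $t-k < s \le t$, both states are updated by the same map on the same input, so stability gives
\[
    \norm{h_s - \hk_s} = \norm{\phi_w(h_{s-1}, x_s) - \phi_w(\hk_{s-1}, x_s)} \le \lambda \norm{h_{s-1} - \hk_{s-1}}.
\]
Iterating this $k$ times back to the reset point and using $\hk_{t-k} = 0$ yields
\[
    \norm{h_t - \hk_t} \le \lambda^k \norm{h_{t-k} - \hk_{t-k}} = \lambda^k \norm{h_{t-k}}.
\]

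The remaining, and main, ingredient is an a priori bound on $\norm{h_{t-k}}$. Here I would combine contraction with the Lipschitz-in-$x$ assumption. Splitting $\norm{\phi_w(h, x)} \le \norm{\phi_w(h, x) - \phi_w(0, x)} + \norm{\phi_w(0, x) - \phi_w(0, 0)} + \norm{\phi_w(0,0)}$ and bounding the first term by $\lambda\norm{h}$ (contraction), the second by $L_x \norm{x} \le L_x B_x$ (Lipschitzness), and using $\phi_w(0,0) = 0$ gives the scalar recurrence $\norm{h_s} \le \lambda \norm{h_{s-1}} + L_x B_x$. Summing the resulting geometric series produces the uniform bound $\norm{h_s} \le \frac{L_x B_x}{1-\lambda}$, valid at every time step (starting from $h_0 = 0$, or in the steady state of a system that has run arbitrarily long).

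Combining the two pieces gives $\norm{h_t - \hk_t} \le \lambda^k \frac{L_x B_x}{1-\lambda}$; requiring the right-hand side to be at most $\eps$ and solving for $k$ (taking $\log_{1/\lambda}$, which reverses the inequality since $\lambda < 1$) reproduces exactly the stated threshold $k \ge \log_{1/\lambda}\paren{\frac{L_x B_x}{(1-\lambda)\eps}}$. I expect the telescoping step to be entirely routine. The hard part is the uniform state-norm bound: specifically, justifying the assumption $\phi_w(0,0) = 0$ (which holds for the models of interest, e.g.\ $\tanh(0) = 0$, but otherwise would contribute an additive constant), and confirming the geometric bound holds uniformly in the reset time $t-k$ independent of how long the full system has been running.
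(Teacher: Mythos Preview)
Your proposal is correct and follows essentially the same approach as the paper: both arguments combine (i) iterating the contraction estimate $k$ times over the truncation window to reduce to $\lambda^k\norm{h_{t-k}}$, and (ii) the uniform state bound $\norm{h_s}\le \frac{L_x B_x}{1-\lambda}$ obtained from the same triangle-inequality split $\norm{\phi_w(h,x)-\phi_w(0,x)}+\norm{\phi_w(0,x)-\phi_w(0,0)}$ together with $\phi_w(0,0)=0$. The paper states the normalization $\phi_w(0,0)=0$ and $h_0=0$ as standing assumptions at the start of the appendix, so your concern about that point is already handled.
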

Lemma~\ref{lem:trunc-hidden} effectively says stable models do not have
long-term memory-- distant inputs do not change the states of the
system. A proof is given in the appendix.
If the prediction
function is Lipschitz, Lemma~\ref{lem:trunc-hidden} immediately implies
the recurrent and truncated model make nearly identical predictions.
\begin{prop}
    \label{prop:inference}
    If~$\phi_w$ is a $L_x$-Lipschitz and~$\lambda$-contractive map, and~$f$ is $L_f$
    Lipschitz, and the truncation length
    $k \geq \log_{1/\lambda} \paren{\frac{L_f L_x B_x}{(1-\lambda)\eps}}$,
    then $\norm{y_t - \yk_t} \leq \eps$.
\end{prop}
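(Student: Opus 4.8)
The plan is to reduce the claim about the outputs to the hidden-state bound already established in Lemma~\ref{lem:trunc-hidden}, invoking only the Lipschitz continuity of the prediction function. Since $f$ is unparameterized and shared by both the full and truncated models, the predictions are $y_t = f(h_t)$ and $\yk_t = f(\hk_t)$, so the sole source of discrepancy between $y_t$ and $\yk_t$ is the gap between the hidden states $h_t$ and $\hk_t$ that Lemma~\ref{lem:trunc-hidden} already controls.

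First I would apply the $L_f$-Lipschitz property of $f$ to write
\begin{align*}
    \norm{y_t - \yk_t} = \norm{f(h_t) - f(\hk_t)} \leq L_f \norm{h_t - \hk_t}.
\end{align*}
It therefore suffices to guarantee $\norm{h_t - \hk_t} \leq \eps / L_f$, which would immediately yield $\norm{y_t - \yk_t} \leq \eps$. I would then invoke Lemma~\ref{lem:trunc-hidden} with the target accuracy $\eps / L_f$ substituted for $\eps$: the lemma ensures $\norm{h_t - \hk_t} \leq \eps/L_f$ whenever $k \geq \log_{1/\lambda}\paren{\frac{L_x B_x}{(1-\lambda)(\eps/L_f)}}$. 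Simplifying the argument of the logarithm turns this into $\log_{1/\lambda}\paren{\frac{L_f L_x B_x}{(1-\lambda)\eps}}$, which is precisely the hypothesized lower bound on $k$ in Proposition~\ref{prop:inference}. Chaining the two inequalities closes the argument.

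There is no substantive obstacle here: all of the difficulty lives in Lemma~\ref{lem:trunc-hidden}, whose contractivity-based geometric-decay estimate we are free to assume. The only points requiring care are bookkeeping ones---confirming that both models use the same prediction map $f$, so that Lipschitzness alone bounds the output gap, and checking that the substitution $\eps \mapsto \eps/L_f$ reproduces the stated logarithmic truncation length exactly rather than merely up to constants.
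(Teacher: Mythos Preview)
Your proposal is correct and matches the paper's approach exactly: the paper states that Proposition~\ref{prop:inference} follows immediately from Lemma~\ref{lem:trunc-hidden} via the Lipschitz property of $f$, without writing out a separate proof, and your argument is precisely that immediate deduction made explicit.
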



\subsection{Approximation during training via gradient descent}
\label{sec:grad_descent}
Equipped with our inference result, we turn towards optimization.  We show
gradient descent for stable recurrent models finds essentially the same
solutions as gradient descent for truncated models. Consequently, both the
recurrent and truncated models found by gradient descent make essentially the
same predictions. 

Our proof technique is to initialize both the recurrent and
truncated models at the same point and track the divergence in weights
throughout the course of gradient descent. Roughly, we show if $k \approx
O(\log(N/\eps))$, then after $N$ steps of gradient descent, the difference in
the weights between the recurrent and truncated models is at most $\eps$. 
Even if the gradients are similar for both models at the same point, it is a
priori possible that slight differences in the gradients accumulate over time
and lead to divergent weights where no meaningful comparison is possible.
Building on similar techniques as~\cite{hardt2016train}, we show that gradient
descent itself is stable, and this type of divergence cannot occur.

Our gradient descent result requires two essential lemmas. The first bounds the
difference in gradient between the full and the truncated model. The second
establishes the gradient map of both the full and truncated models is Lipschitz.
We defer proofs of both lemmas to the appendix.

Let $p_T$ denote the loss function evaluated on recurrent model after $T$ time
steps, and define $\pk_T$ similarly for the truncated model. Assume there some
compact, convex domain~$\domain \subset \reals^m$ so that the map~$\phi_w$ is
stable for all choices of parameters $w \in \domain$.

\begin{lemma}\label{lem:truncation}
Assume $p$ (and therefore $\pk$) is Lipschitz and smooth.
Assume $\phi_w$ is smooth, $\lambda$-contractive, and Lipschitz in $x$
and $w$. Assume the inputs satisfy
$\norm{x_t} \leq B_x$, then
\begin{align*}
    \norm{\grad_w p_T - \grad_w \pk_T} = \gamma k\lambda^k,
\end{align*}
where $\gamma = O\paren{B_x (1-\lambda)^{-2}}$, suppressing dependence
on the Lipschitz and smoothness parameters.
\end{lemma}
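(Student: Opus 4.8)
The plan is to expand both gradients via backpropagation through time and compare them term by term. Writing $J_s = \pder{\phi_w(h_{s-1}, x_s)}{h}$ for the state Jacobian and $G_t = \pder{\phi_w(h_{t-1}, x_t)}{w}$ for the weight Jacobian, the full gradient decomposes as
\[
\grad_w p_T = \sum_{t=1}^T \paren{\grad_h p_T}^\top \paren{\prod_{s=t+1}^T J_s} G_t,
\]
and $\grad_w \pk_T$ takes the identical form but with every quantity evaluated along the truncated trajectory $\hk$ and with the outer sum restricted to $t \in \set{T-k+1, \dots, T}$, since $\hk_{T-k}=0$ annihilates all earlier contributions. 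Subtracting, I would split the difference into a \emph{tail} piece $\sum_{t=1}^{T-k}(\cdots)$ present only in the full model and an \emph{overlap} piece $\sum_{t=T-k+1}^{T}(\cdots)$ where both models contribute along diverging trajectories. A preliminary step is the a priori state bound $\norm{h_t}=O\paren{B_x(1-\lambda)^{-1}}$, which follows from $\lambda$-contractivity together with $\phi_w(0,\cdot)$ bounded, and which is needed to control $\norm{G_t}$ and the state differences throughout.

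For the tail, contractivity gives $\norm{\prod_{s=t+1}^T J_s} \leq \lambda^{T-t}$, so with the loss gradient bounded (Lipschitz $p$) and $\norm{G_t}$ bounded (Lipschitz $\phi_w$ in $w$), the tail is at most a constant times $\sum_{t=1}^{T-k}\lambda^{T-t} \le \lambda^k/(1-\lambda)$. This is $O(\lambda^k)$ and is dominated by the overlap bound. For the overlap, I would write each summand difference $g_t - \tilde g_t$ as a telescoping sum over its three factors—the loss gradient $\grad_h p_T$, the state-Jacobian product $\prod_{s>t}J_s$, and the weight Jacobian $G_t$—perturbing one factor per line while bounding the others by their operator norms. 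Lemma~\ref{lem:trunc-hidden} and its proof supply the state-difference estimate $\norm{h_s - \hk_s} \le \lambda^{s-(T-k)}\cdot O\paren{B_x(1-\lambda)^{-1}}$ valid on the overlap window, and smoothness of $p$ and $\phi_w$ converts each factor perturbation into (a smoothness constant) times such a state difference.

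For the state-Jacobian product I would apply a second telescoping expansion,
\[
\prod_{s=t+1}^T J_s - \prod_{s=t+1}^T \tilde J_s = \sum_{r=t+1}^T \paren{\prod_{s=t+1}^{r-1}J_s}\paren{J_r - \tilde J_r}\paren{\prod_{s=r+1}^T \tilde J_s},
\]
and bound each term by $\norm{\prod_{s=t+1}^{r-1}J_s}\,\norm{J_r-\tilde J_r}\,\norm{\prod_{s=r+1}^T\tilde J_s}\le \lambda^{T-t-1}\beta\norm{h_{r-1}-\hk_{r-1}}$. Substituting the state-difference estimate and summing the resulting geometric series over $r$ gives a per-$t$ bound of order $\lambda^{k}B_x(1-\lambda)^{-2}$ that is \emph{uniform} in $t$. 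The weight-Jacobian perturbation contributes an identical $O(\lambda^k)$ per-$t$ bound, while the loss-gradient perturbation is smaller. Summing these $O(\lambda^k)$ per-$t$ estimates over the $k$ overlap indices produces the advertised factor of $k$, yielding $\gamma k\lambda^k$ with $\gamma=O(B_x(1-\lambda)^{-2})$.

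The main obstacle is the combined telescoping and summation bookkeeping in the overlap: the state differences $\norm{h_{r-1}-\hk_{r-1}}$ grow geometrically as $r$ recedes toward the truncation boundary $T-k$, whereas the surrounding Jacobian products contribute a clean factor $\lambda^{T-t-1}$ independent of $r$. One must verify that the inner sum over $r$ collapses to a geometric series bounded by $O(\lambda^{k-1}/(1-\lambda))$ uniformly in $t$, so that the decay of the Jacobian products exactly absorbs the growth of the state differences. Confirming that this per-$t$ bound does not itself accrue a factor of $T-t$—and hence that summing over the $k$ overlap indices yields $k\lambda^k$ rather than $k^2\lambda^k$—is the delicate part of the argument.
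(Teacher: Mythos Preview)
Your proposal is correct and follows essentially the same approach as the paper's proof: both expand via backpropagation through time, split into a tail ($t \le T-k$) and an overlap, bound the tail directly via contractivity, and handle the overlap by telescoping over the three factors (loss gradient, state-Jacobian product, weight Jacobian), with a second telescope for the Jacobian product combined with the state-difference estimate from Lemma~\ref{lem:trunc-hidden}. The only cosmetic difference is that you write the Jacobian-product telescope as an explicit sum identity whereas the paper iterates a two-term recursion peeling from $T$ downward; the resulting per-$t$ bound $O\!\paren{\lambda^{k} B_x (1-\lambda)^{-2}}$ and the final $k\lambda^k$ after summing over the overlap are identical.
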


\begin{lemma}
    \label{lem:smoothness}
    For any $w, w' \in \domain$, suppose $\phi_w$ is smooth, $\lambda$-contractive,
    and Lipschitz in $w$. If $p$ is Lipschitz and smooth, then
    \begin{align*}
        \norm{\grad_w p_T(w) - \grad_w p_T(w')}
        \leq \beta\norm{w - w'},
    \end{align*}
    where $\beta = O\paren{(1-\lambda)^{-3}}$, suppressing dependence on
    the Lipschitz and smoothness parameters.
\end{lemma}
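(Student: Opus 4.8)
The plan is to compare the two gradients $\grad_w p_T(w)$ and $\grad_w p_T(w')$ directly by unrolling backpropagation through time and tracking, factor by factor, how much each piece changes as the weights move from $w$ to $w'$. The contractivity assumption $\norm{\grad_h \phi_w} \le \lambda$ is what makes every resulting sum over time steps converge, and a careful count of the surviving geometric series is what produces the claimed $(1-\lambda)^{-3}$.

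First I would establish an auxiliary bound showing the hidden states themselves are Lipschitz in $w$. Running both parameter settings on the same input and writing $h_t, h_t'$ for the respective states, I split $h_t - h_t' = \paren{\phi_w(h_{t-1}, x_t) - \phi_w(h_{t-1}', x_t)} + \paren{\phi_w(h_{t-1}', x_t) - \phi_{w'}(h_{t-1}', x_t)}$, bound the first difference by $\lambda\norm{h_{t-1} - h_{t-1}'}$ using contractivity and the second by $L_w\norm{w - w'}$ using the Lipschitz-in-$w$ assumption, and unroll the resulting recursion into a geometric series. This yields $\norm{h_t - h_t'} \le \frac{L_w}{1-\lambda}\norm{w-w'}$, contributing the first factor of $(1-\lambda)^{-1}$.

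Next I would write the gradient in the standard form $\grad_w p_T = \sum_{t} \paren{\prod_{s=t+1}^{T} A_s}^{\!\top} g_t$, where $A_s = \grad_h \phi_w(h_{s-1}, x_s)$ are the per-step Jacobians and $g_t$ collects the immediate derivative $\grad_w \phi_w$ together with the loss gradient at the final state. Subtracting the analogous expression at $w'$ and inserting $\pm$ terms splits the difference into a Jacobian-product part $\paren{\prod A_s - \prod A_s'}g_t$ and a part $\paren{\prod A_s'}(g_t - g_t')$. The second part is comparatively benign: $\norm{g_t - g_t'}$ is itself $O((1-\lambda)^{-1})\norm{w-w'}$ by the state bound together with smoothness of $\phi_w$ and $p$, while the prefactor satisfies $\norm{\prod_{s} A_s'} \le \lambda^{T-t}$, which sums to $(1-\lambda)^{-1}$, so this part is only $O((1-\lambda)^{-2})$ and is dominated. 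The terms $\norm{g_t}$ are bounded by a constant depending only on the Lipschitz and smoothness parameters, independent of $\lambda$.

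The heart of the argument, and the step I expect to be the main obstacle, is the Jacobian-product difference $\prod_{s} A_s - \prod_{s} A_s'$. Here I would use the telescoping identity $\prod_s A_s - \prod_s A_s' = \sum_{r} \paren{\prod_{s > r} A_s'}(A_r - A_r')\paren{\prod_{s < r} A_s}$, bound each flanking product by the appropriate power of $\lambda$ so that their product contributes $\lambda^{T-t-1}$, and bound each single-factor difference $\norm{A_r - A_r'}$ using smoothness of $\phi_w$ (Lipschitz gradient) together with the state bound from the first step, giving $\norm{A_r - A_r'} = O((1-\lambda)^{-1})\norm{w - w'}$. Summing the telescope over $r$ produces a factor $(T-t)\lambda^{T-t-1}$, and summing this over $t$ in the outer gradient sum gives $\sum_{j} j\lambda^{j-1} = (1-\lambda)^{-2}$. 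Multiplying the $(1-\lambda)^{-2}$ from the double sum by the $(1-\lambda)^{-1}$ inside $\norm{A_r - A_r'}$ yields the final $(1-\lambda)^{-3}$. The real obstacle is bookkeeping: keeping the two nested sums, over the telescoping index $r$ and the backpropagation index $t$, organized so the geometric factors collapse cleanly and the dependence on $T$ cancels. The smoothness assumption on $\phi_w$ is exactly what is needed to control the single-factor Jacobian differences that drive the whole estimate.
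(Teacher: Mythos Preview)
Your proposal is correct and follows essentially the same route as the paper: first the auxiliary state-Lipschitz bound $\norm{h_t - h_t'} \le \frac{L_w}{1-\lambda}\norm{w-w'}$ (the paper's Lemma~\ref{lem:smooth-hidden}), then the telescoping identity on the Jacobian products, with the same accounting that turns $\sum_t (T-t)\lambda^{T-t-1}$ into the dominant $(1-\lambda)^{-2}$ factor and combines with the $(1-\lambda)^{-1}$ from each single-factor Jacobian difference to give $(1-\lambda)^{-3}$. The only cosmetic difference is that you bundle $\grad_{h_T} p_T$ and $\partial h_t/\partial w$ into a single $g_t$, whereas the paper splits them apart before telescoping; this changes the order of the intermediate triangle-inequality steps but not the substance.
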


Let $\rw^i$ be the weights of the recurrent model on step $i$ and
define $\tw^{i}$ similarly for the truncated model. At initialization, $\rw^0 =
\tw^0$. For $k$ sufficiently large, Lemma~\ref{lem:truncation} guarantees the
difference between the gradient of the recurrent and truncated models is
negligible. Therefore, after a gradient update, $\norm{\rw^1 - \tw^1}$ is small.
Lemma~\ref{lem:smoothness} then guarantees that this small difference in
weights does not lead to large differences in the gradient on the subsequent
time step. For an appropriate choice of learning rate, formalizing
this argument leads to the following proposition.
\begin{prop}
    \label{prop:grad_descent_bound}
    Under the assumptions of Lemmas~\ref{lem:truncation}
    and~\ref{lem:smoothness}, for compact, convex $\domain$, after $N$ steps of
    projected gradient descent with step size $\alpha_t = \alpha / t$,
    $\norm{\rw^N - \tw^N} \leq \alpha \gamma k \lambda^k N^{\alpha \beta + 1}$.
\end{prop}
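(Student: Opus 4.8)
The plan is to track the divergence $\delta_i = \norm{\rw^i - \tw^i}$ between the two weight trajectories and show it obeys a linear recursion whose solution grows only polynomially in~$N$. Writing the $i$-th projected gradient step (with schedule $\alpha_i = \alpha/i$) for each model as
\begin{align*}
    \rw^{i} = \Pi_\domain\paren{\rw^{i-1} - \alpha_i \grad_w p_T(\rw^{i-1})}, \qquad
    \tw^{i} = \Pi_\domain\paren{\tw^{i-1} - \alpha_i \grad_w \pk_T(\tw^{i-1})},
\end{align*}
where $\Pi_\domain$ is Euclidean projection onto the compact convex set~$\domain$, I would first use that projection onto a convex set is non-expansive. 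Subtracting the two updates and applying the triangle inequality then gives the one-step bound $\delta_{i} \leq \delta_{i-1} + \alpha_i \norm{\grad_w p_T(\rw^{i-1}) - \grad_w \pk_T(\tw^{i-1})}$.

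Next I would split the gradient difference by inserting $\grad_w p_T(\tw^{i-1})$:
\begin{align*}
    \norm{\grad_w p_T(\rw^{i-1}) - \grad_w \pk_T(\tw^{i-1})}
    \leq \norm{\grad_w p_T(\rw^{i-1}) - \grad_w p_T(\tw^{i-1})}
    + \norm{\grad_w p_T(\tw^{i-1}) - \grad_w \pk_T(\tw^{i-1})}.
\end{align*}
The first term compares the \emph{same} recurrent gradient map at two points and is bounded by $\beta\,\delta_{i-1}$ via Lemma~\ref{lem:smoothness}; the second compares the recurrent and truncated gradient maps at a \emph{single} point and is bounded by $\gamma k\lambda^k$ via Lemma~\ref{lem:truncation}. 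Substituting $\alpha_i = \alpha/i$ yields the recursion
\begin{align*}
    \delta_{i} \leq \paren{1 + \frac{\alpha\beta}{i}}\,\delta_{i-1} + \frac{\alpha\gamma k\lambda^k}{i}, \qquad \delta_0 = 0.
\end{align*}

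Finally I would unroll this recursion, obtaining $\delta_N \leq \sum_{i=1}^{N} \frac{\alpha\gamma k\lambda^k}{i}\prod_{j=i+1}^{N}\paren{1 + \frac{\alpha\beta}{j}}$. I would bound the homogeneous product using $1+x \leq e^x$ together with $\sum_{j=i+1}^{N} 1/j \leq \log(N/i)$, which gives $\prod_{j=i+1}^{N}\paren{1+\alpha\beta/j} \leq \paren{N/i}^{\alpha\beta}$. Bounding each of the at most $N$ summands crudely by $\tfrac{1}{i}\paren{N/i}^{\alpha\beta} \leq N^{\alpha\beta}$ then produces $\delta_N \leq \alpha\gamma k\lambda^k N^{\alpha\beta + 1}$, as claimed. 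The main obstacle is controlling the compounding of the $\paren{1 + \alpha\beta/i}$ factors: with a constant step size these would accumulate exponentially, and no useful comparison would survive. The decaying schedule $\alpha_t = \alpha/t$ is precisely what converts the product into the polynomial factor $N^{\alpha\beta}$, so the delicate point is the interaction between the Lipschitz stability of the gradient map (Lemma~\ref{lem:smoothness}) and the harmonic step-size sum; the remaining manipulations are routine.
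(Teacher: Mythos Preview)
Your proposal is correct and follows essentially the same argument as the paper's proof: both set up the recursion $\delta_i \leq (1+\alpha\beta/i)\delta_{i-1} + \alpha\gamma k\lambda^k/i$ via non-expansiveness of projection and the two lemmas, then unroll and bound the product using $1+x\leq e^x$ together with $\sum_{j=i+1}^N 1/j \leq \log(N/i)$ to obtain the $N^{\alpha\beta+1}$ factor. The only cosmetic difference is the order in which $1+x\leq e^x$ is applied (before versus after unrolling), and the final crude summation $\sum_{i=1}^N i^{-(\alpha\beta+1)} \leq N$ is identical in both.
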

The decaying step size in our theorem is consistent with the regime in which
gradient descent is known to be stable for non-convex training
objectives~\cite{hardt2016train}. While the decay is faster than many learning
rates encountered in practice, classical results nonetheless show that with this
learning rate gradient descent still converges to a stationary point; see p.~119
in~\cite{bertsekas99nonlinear} and references there. In
the appendix,  we give empirical evidence the $O(1/t)$ rate
is necessary for our theorem and show examples of stable systems trained with
constant or $O(1/\sqrt{t})$ rates that do not satisfy our bound.

Critically, the bound in Proposition~\ref{prop:grad_descent_bound} goes to 0
as $k \to\infty$. In particular, if we take $\alpha = 1$ and $k \geq
\Omega(\log(\gamma N^\beta / \eps))$, then after $N$ steps of projected
gradient descent, $\norm{\rw^N - \tw^N} \leq \eps$. For this choice of $k$, we
obtain the main theorem. The proof is left to the appendix.
\begin{thm}
    \label{thm:main-formal}
    Let $p$ be Lipschitz and smooth. Assume $\phi_w$ is smooth,
    $\lambda$-contractive, Lipschitz in $x$ and $w$. Assume the inputs are
    bounded, and the prediction function $f$ is $L_f$-Lipschitz.  If $k \geq
    \Omega(\log(\gamma N^\beta / \eps))$, then after
    $N$ steps of projected gradient descent with step size $\alpha_t = 1/t$,
    $\norm{y_T - \yk_T} \leq \eps$.
\end{thm}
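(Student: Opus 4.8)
The plan is to chain together the training-time weight bound from Proposition~\ref{prop:grad_descent_bound} with the inference-time prediction bound from Proposition~\ref{prop:inference}, using the smoothness of the gradient map (Lemma~\ref{lem:smoothness}) and the stated choice of truncation length to make both contributions small. The key observation is that the quantity we want to control, $\norm{y_T - \yk_T}$, compares the prediction of the recurrent model run with weights $\rw^N$ against the prediction of the truncated model run with its own weights $\tw^N$. These differ for two distinct reasons: the two models were trained to slightly different weights, and even at identical weights a recurrent and a truncated model make slightly different predictions. The strategy is to decompose the error along exactly these two axes via the triangle inequality.

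Concretely, I would introduce an intermediate object: the \emph{truncated} model evaluated at the \emph{recurrent} weights $\rw^N$, whose prediction I will call $\hat y_T$. Then write
\begin{align*}
    \norm{y_T - \yk_T} \leq \norm{y_T - \hat y_T} + \norm{\hat y_T - \yk_T}.
\end{align*}
The first term compares the full recurrent model and its truncation \emph{at the same weights} $\rw^N$, and is therefore controlled directly by Proposition~\ref{prop:inference}: choosing $k \geq \log_{1/\lambda}(L_f L_x B_x / ((1-\lambda)\eps'))$ forces this term below any target $\eps'$. The second term compares the truncated model at two different weight settings, $\rw^N$ versus $\tw^N$; since the prediction of the truncated model is a smooth (hence locally Lipschitz, with constant governed by the same Lipschitz/smoothness parameters over the compact domain $\domain$) function of the weights, this term is bounded by a constant times $\norm{\rw^N - \tw^N}$, which Proposition~\ref{prop:grad_descent_bound} bounds by $\alpha \gamma k \lambda^k N^{\alpha\beta+1}$.

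With $\alpha = 1$, this second bound becomes $\gamma k \lambda^k N^{\beta + 1}$, and the hypothesis $k \geq \Omega(\log(\gamma N^\beta/\eps))$ is precisely calibrated so that $k\lambda^k \cdot N^{\beta+1}$ is driven below $\eps$ (the extra logarithmic factor $k$ and the single extra power of $N$ are absorbed into the constant hidden in the $\Omega$). Since the same choice of $k$ grows like $\log(1/\eps)$ and thus also dominates the requirement from Proposition~\ref{prop:inference}, both error terms are simultaneously $O(\eps)$, and rescaling $\eps$ by the constant factor yields the claimed $\norm{y_T - \yk_T} \leq \eps$.

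The main obstacle I anticipate is the second term: I am asserting that the truncated model's prediction depends Lipschitz-continuously on its weights, with a constant that is uniform over the compact convex domain $\domain$. Lemma~\ref{lem:smoothness} gives Lipschitzness of the \emph{gradient} map $\grad_w p_T$, not directly of the prediction map $w \mapsto \yk_T(w)$, so I would either invoke an analogous (and easier) bound showing $w \mapsto \yk_T(w)$ is itself Lipschitz on $\domain$ with a constant of the same $(1-\lambda)^{-c}$ flavor, or note that this follows from the smoothness assumptions on $\phi_w$ and $f$ together with $\lambda$-contractivity by the same telescoping argument that underlies Lemma~\ref{lem:trunc-hidden}. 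Verifying that this prediction-Lipschitz constant is genuinely uniform over $\domain$ (rather than blowing up near the boundary where $\lambda \to 1$) is where the compactness of $\domain$ and the global stability assumption do the essential work.
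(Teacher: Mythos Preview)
Your proposal is correct and follows essentially the same approach as the paper: a triangle-inequality decomposition into a ``same-weights, different-models'' term (controlled by Lemma~\ref{lem:trunc-hidden}/Proposition~\ref{prop:inference}) and a ``same-model, different-weights'' term (controlled via Proposition~\ref{prop:grad_descent_bound}). The only cosmetic difference is the choice of intermediate point---the paper inserts the \emph{full} recurrent model at the truncated weights $\tw^N$ rather than the truncated model at $\rw^N$---which lets it invoke directly the already-proved Lemma~\ref{lem:smooth-hidden} (giving $\norm{h_T(w)-h_T(w')}\le L_w\norm{w-w'}/(1-\lambda)$) instead of the truncated analogue you correctly anticipate needing.
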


\section{Experiments}
\label{sec:experiments}
In the experiments, we show stable recurrent models can achieve solid
performance on several benchmark sequence tasks. Namely, we show
unstable recurrent models can often be made stable without a loss in
performance. In some cases, there is a small gap between the performance between
unstable and stable models. We analyze whether this gap is
indicative of a ``price of stability'' and show the unstable models involved are
stable in a data-dependent sense.

\subsection{Tasks}
We consider four benchmark sequence problems--word-level language modeling,
character-level language modeling, polyphonic music modeling, and slot-filling.

\paragraph{Language modeling.}
In language modeling, given a sequence of words or characters, the model must
predict the next word or character.  For character-level language modeling, we
train and evaluate models on Penn Treebank \cite{marcus1993building}. To
increase the coverage of our experiments, we train and evaluate the word-level
language models on the Wikitext-2 dataset, which is twice as large as Penn
Treebank and features a larger vocabulary \cite{merity2016pointer}. Performance
is reported using bits-per-character for character-level models and perplexity
for word-level models.

\paragraph{Polyphonic music modeling.}
In polyphonic music modeling, a piece is represented as a sequence of 88-bit
binary codes corresponding to the 88 keys on a piano, with a 1 indicating a key
that is pressed at a given time. Given a sequence of codes, the task is to
predict the next code. We evaluate our models on JSB Chorales, a polyphonic
music dataset consisting of 382 harmonized chorales by J.S. Bach
\cite{allan2005harmonising}.  Performance is measured using negative
log-likelihood.

\paragraph{Slot-filling.}
In slot filling, the model takes as input a query like ``I want to Boston on
Monday'' and outputs a class label for each word in the input,  e.g. Boston maps
to {\tt Departure\_City} and Monday maps to {\tt Departure\_Time}. We use the
Airline Travel Information Systems (ATIS) benchmark and report the F1 score for
each model \cite{price1990evaluation}.

\subsection{Comparing Stable and Unstable Models}
For each task, we first train an unconstrained RNN and an unconstrained LSTM. All the
hyperparameters are chosen via grid-search to maximize the performance of the
unconstrained model. For consistency with our theoretical results in
Section~\ref{sec:ffnn_approximation} and stability conditions in
Section~\ref{sec:examples}, both models have a single recurrent layer and are
trained using plain SGD.  In each case, the resulting model is unstable.
However, we then retrain the best models using projected gradient descent to
enforce stability \emph{without retuning the hyperparameters}. In the RNN case,
we constrain $\norm{W} < 1$. After each gradient update, we project the
$W$ onto the spectral norm ball by computing the SVD and thresholding the
singular values to lie in $[0, 1)$. In the LSTM case, after each gradient
update, we normalize each row of the weight matrices to satisfy the
sufficient conditions for stability given in Section~\ref{sec:examples}.
Further details are given in the appendix.

\paragraph{Stable and unstable models achieve similar performance.}
Table~\ref{table:stable_results} gives a comparison of the performance between
stable and unstable RNNs and LSTMs on each of the different tasks. Each of the
reported metrics is computed on the held-out test set. We also show a
representative comparison of learning curves for word-level language modeling 
and polyphonic music modeling in Figures~\ref{fig:word_lm_train}
and~\ref{fig:poly_music_train}.

\begin{table}[t]
\caption{
    Comparison of stable and unstable models on a variety of sequence modeling tasks.
    For all the tasks, stable and unstable RNNs achieve the same performance.
    For polyphonic music and slot-filling, stable and unstable LSTMs achieve the
    same results. On language modeling, there is a small gap between stable and
    unstable LSTMs. We discuss this in Section~\ref{sec:lstm_gap}.
    Performance is evaluated on the held-out test set. For negative
    log-likelihood (nll), bits per character (bpc), and perplexity, lower is
    better. For F1 score, higher is better.}
\label{table:stable_results}
\begin{center}
\begin{tabular}{llllll}
&&\multicolumn{4}{c}{\bf Model} \\
&&\multicolumn{2}{c}{RNN} &\multicolumn{2}{c}{LSTM} \\
\multicolumn{1}{c}{\bf Sequence Task} & \multicolumn{1}{c}{\textbf{Dataset} (measure)} & Unstable & Stable & Unstable & Stable \\ 
\hline \\
    Polyphonic Music & JSB Chorales (nll)            &8.9 & 8.9 &
    8.5 & 8.5\\
    Slot-Filling &Atis (F1 score)            &94.7 & 94.7 & 95.1 & 94.6 \\
    Word-level LM &Wikitext-2 (perplexity)         &146.7 & 143.5
    & 95.7 & 113.2 \\
    Character-level LM &Penn Treebank (bpc)
    &1.8 & 1.9 & 1.4 & 1.9
\end{tabular}
\end{center}
\end{table}

\begin{figure}[h!]
    \centering
    \subfigure[Word-level language modeling]{%
        \label{fig:word_lm_train}
        \includegraphics[width=0.45\textwidth]{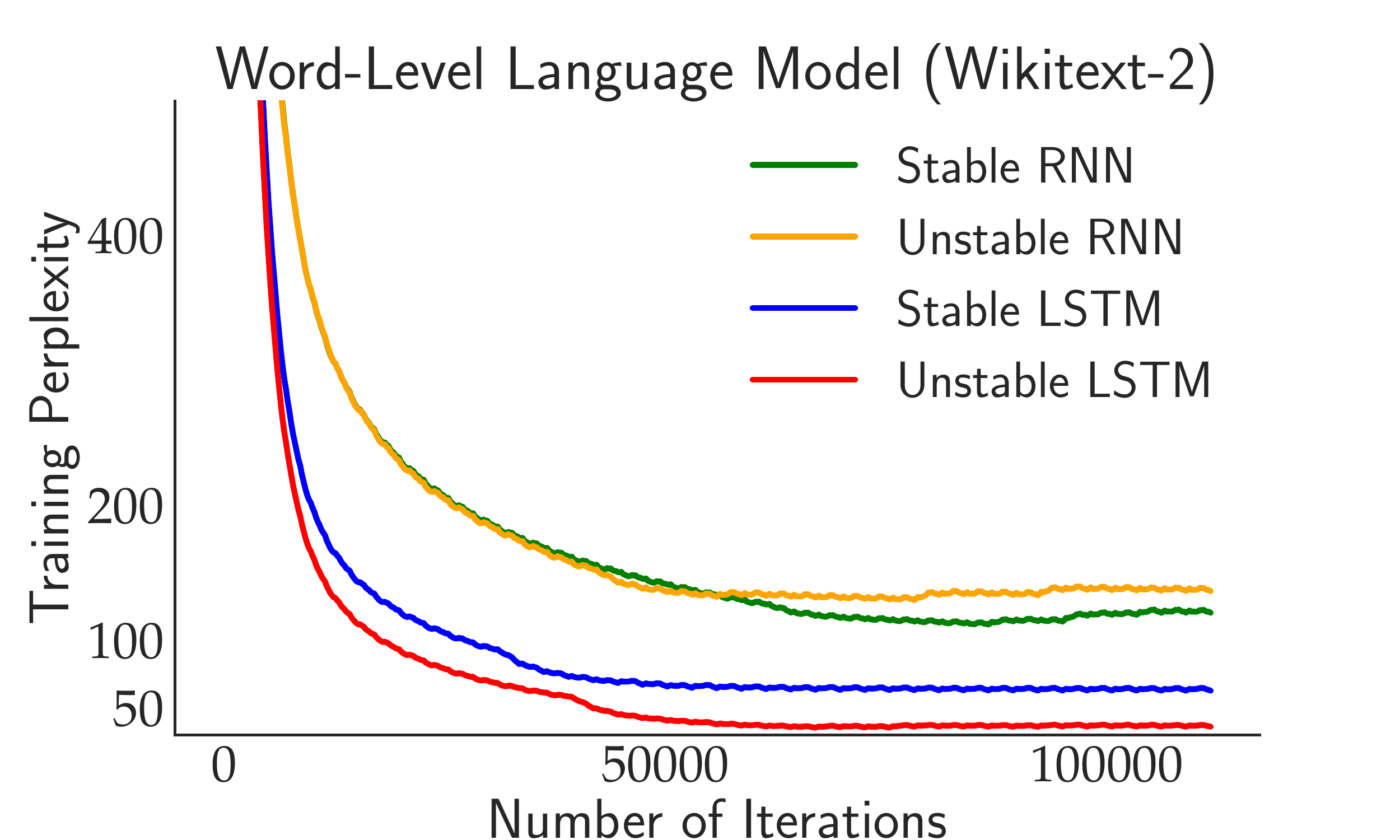}
    }
    \qquad
    \subfigure[Polyphonic music modeling]{%
        \label{fig:poly_music_train}
        \includegraphics[width=0.45\textwidth]{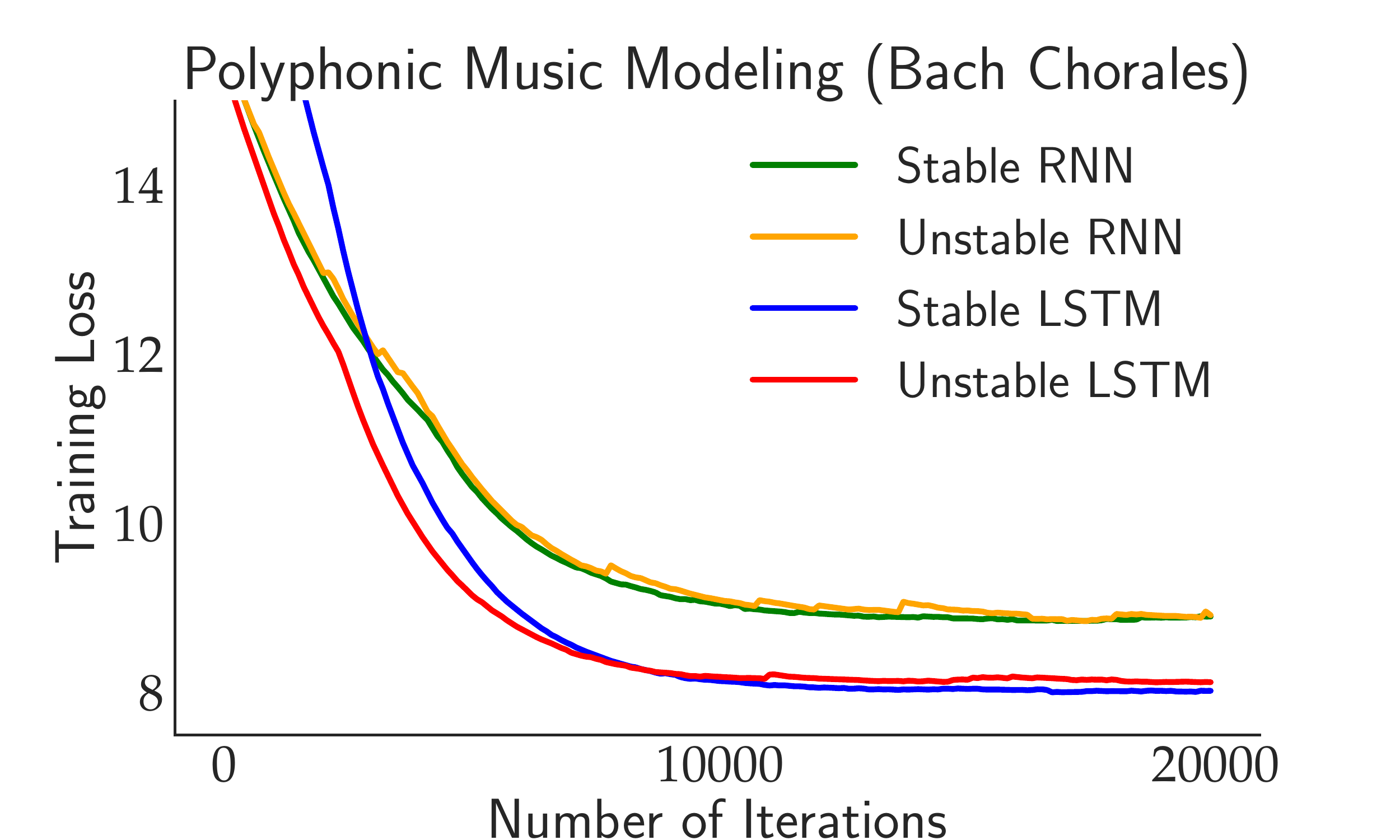}
    }
    \caption{Stable and unstable variants of common recurrent architectures
    achieve similar performance across a range of different sequence tasks.}
\end{figure}

Across all the tasks we considered, stable and unstable RNNs have roughly the
same performance. Stable RNNs and LSTMs achieve results comparable to published
baselines on slot-filling \cite{mesnil2015using} and polyphonic music modeling
\cite{bai2018empirical}. On word and character level language modeling, both
stable and unstable RNNs achieve comparable results to \cite{bai2018empirical}.

On the language modeling tasks, however, there is a gap between stable and
unstable LSTM models. Given the restrictive conditions we place on the LSTM to
ensure stability, it is surprising they work as well as they do. Weaker
conditions ensuring stability of the LSTM could reduce this gap.  It is
also possible imposing stability comes at a cost in representational capacity
required for some tasks. 

\subsection{What is the ``price of stability'' in sequence modeling?}
\label{sec:lstm_gap}
The gap between stable and unstable LSTMs on language modeling raises the
question of whether there is an intrinsic performance cost for using stable
models on some tasks. If we measure stability in a data-dependent fashion, then
the unstable LSTM language models are stable, indicating this gap is illusory.
However, in some cases with short sequences, instability can offer modeling
benefits.

\paragraph{LSTM language models are stable in a ``data-dependent'' way.}
Our notion of stability is conservative and requires stability to hold for
every input and pair of hidden states. If we instead consider a weaker,
data-dependent notion of stability, the word and character-level LSTM 
models are stable (in the iterated sense of Proposition~\ref{prop:lstm-stability}). In
particular, we compute the stability parameter only \emph{using input sequences
from the data}. Furthermore, we only evaluate stability on hidden states
\emph{reachable via gradient descent}. More precisely, to estimate
$\lambda$, we run gradient ascent to find worst-case hidden states $h, h'$ to
maximize $\frac{\norm{\phi_w(h, x) - \phi_w(h',x)}}{\norm{h - h'}}$. More details
are provided in the appendix. 

The data-dependent definition given above is a useful diagnostic---  when the
sufficient stability conditions fail to hold, the data-dependent condition
addresses whether the model is still operating in the stable regime. Moreover,
when the input representation is fixed during training, our theoretical results
go through without modification when using the data-dependent definition. 

Using the data-dependent measure, in Figure~\ref{fig:data_dependent_stability},
we show the iterated character-level LSTM, $\phi_{\mathrm{LSTM}}^r$, is stable
for $r \approx 80$ iterations. A similar result holds for the word-level
language model for $r \approx 100$. These findings are consistent with
experiments in \cite{laurent2017recurrent} which find LSTM trajectories
converge after approximately 70 steps \emph{only when evaluated on sequences
from the data}.  For language models, the ``price of stability'' is therefore
much smaller than the gap in Table~\ref{table:stable_results} suggests-- even
the ``unstable'' models are operating in the stable regime on the data
distribution.

\paragraph{Unstable systems can offer performance improvements for short-time
horizons.} 
When sequences are short, training unstable models is less difficult because
exploding gradients are less of an issue. In these case, unstable models can
offer performance gains.  To demonstrate this, we train truncated unstable
models on the polyphonic music task for various values of the
truncation parameter $k$.  In Figure~\ref{fig:polymusic_unstable}, we
simultaneously plot the performance of the unstable model and the stability
parameter $\lambda$ for the converged model for each $k$. For short-sequences,
the final model is more unstable ($\lambda \approx 3.5$) and achieves a better
test-likelihood.  For longer sequence lengths, $\lambda$ decreases closer to the
stable regime ($\lambda \approx 1.5)$, and this improved test-likelihood
performance disappears.

\begin{figure}[h!]%
    \centering
    \subfigure[Data-dependent stability of character-level language models.
               The iterated-LSTM refers to the iteration system
               $\phi^r_{\mathrm{LSTM}} = \phi_{\mathrm{LSTM}} \circ \cdots \circ \phi_{\mathrm{LSTM}}$.]{%
        \label{fig:data_dependent_stability}
        \includegraphics[width=0.45\textwidth]{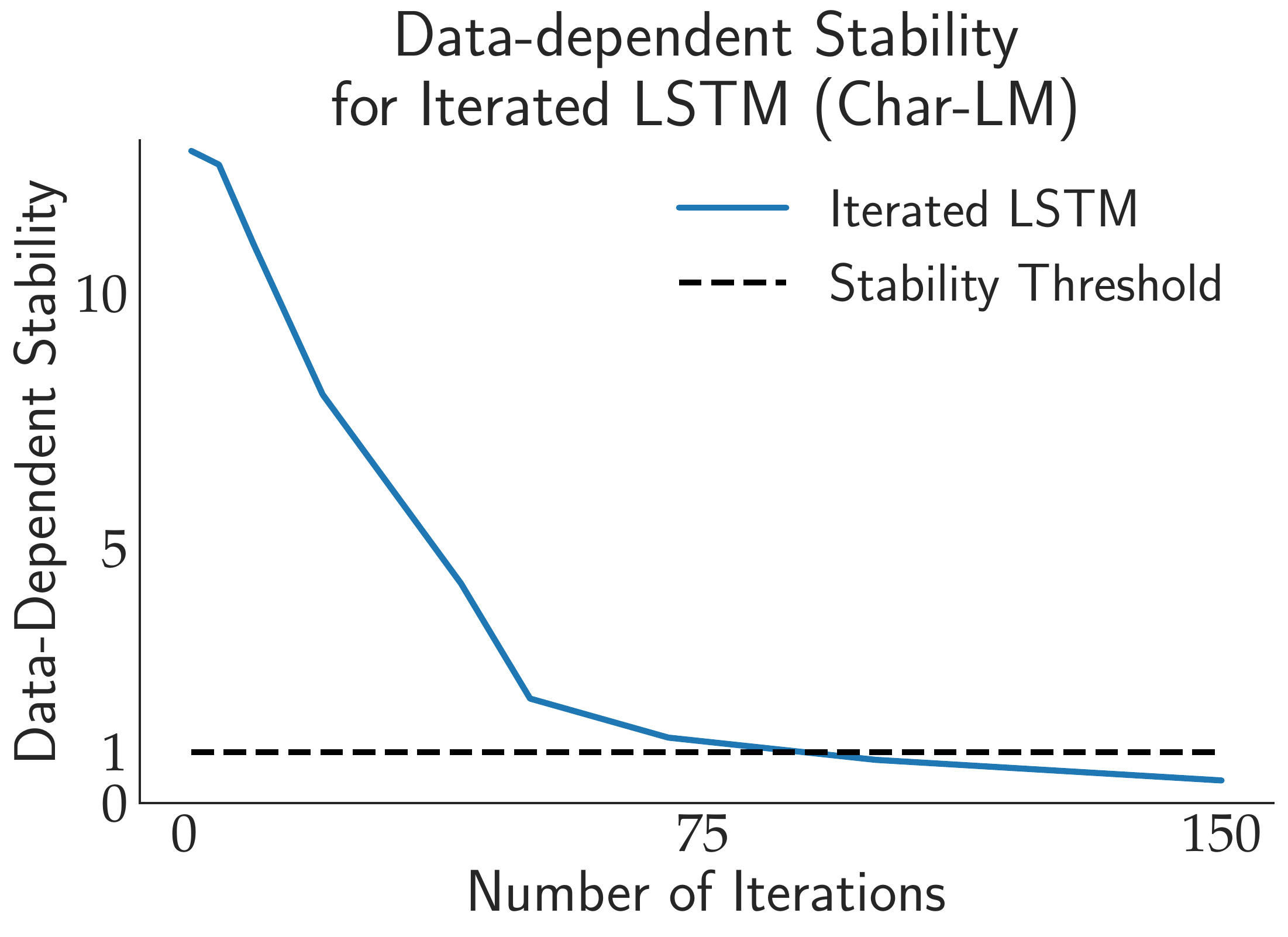}
    }
    \qquad
    \subfigure[Unstable models can boost performance for short sequences.]{%
        \label{fig:polymusic_unstable}
        \includegraphics[width=0.45\textwidth]{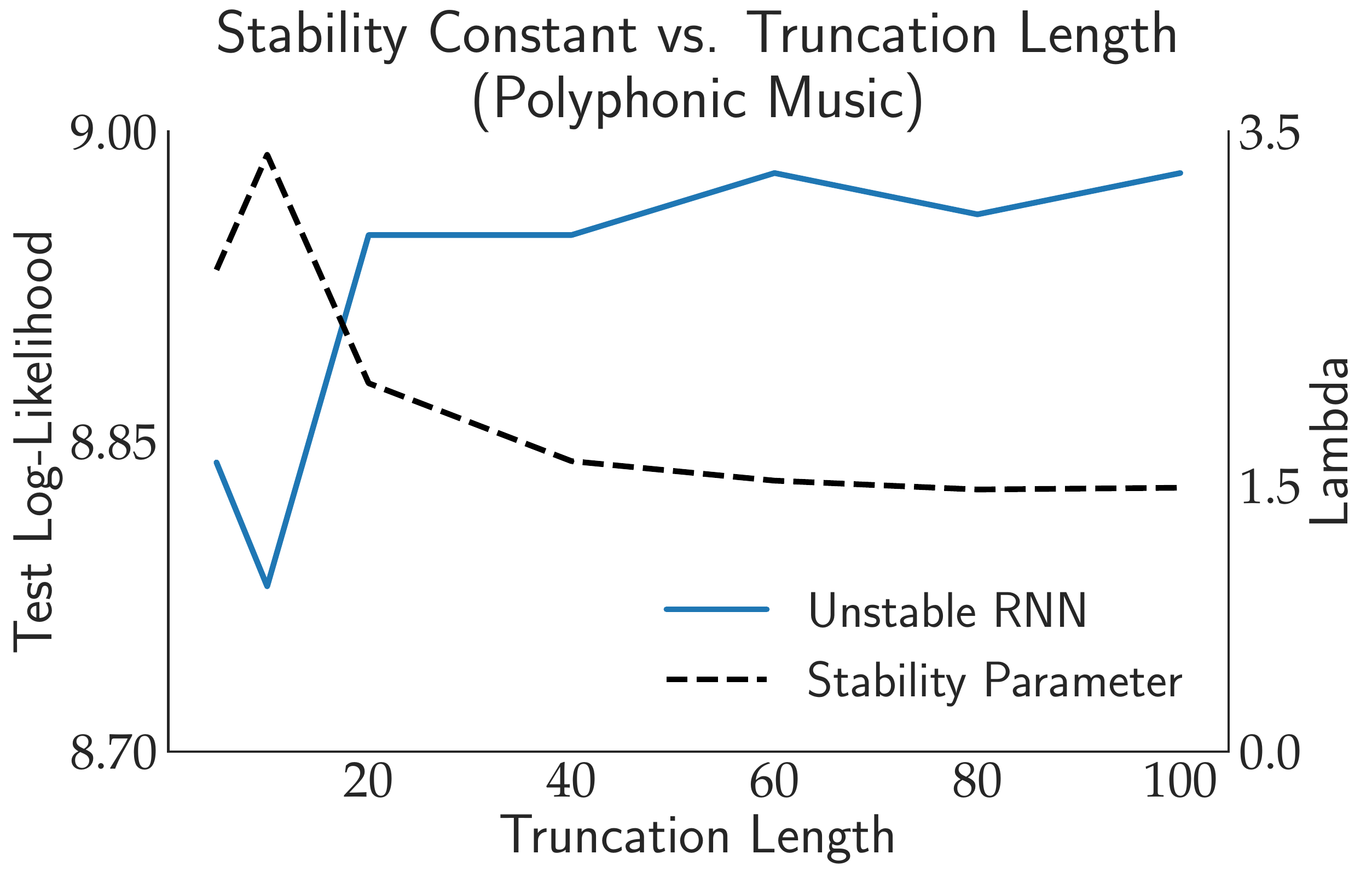}
    }
    \caption{What is the intrinsic ``price of stability''? For language
        modeling, we show the unstable LSTMs are actually stable in weaker,
        data-dependent sense. On the other hand, for polyphonic music modeling
        with short sequences, instability can improve model performance.}
\end{figure}

\subsection{Unstable Models Operate in the Stable Regime}
In the previous section, we showed nominally unstable models often
satisfy a data-dependent notion of stability. In this section, we offer further
evidence unstable models are operating in the stable regime. These results
further help explain why stable and unstable models perform comparably in
experiments.

\paragraph{Vanishing gradients.}
Stable models necessarily have vanishing gradients, and indeed this ingredient
is a key ingredient in the proof of our training-time approximation result. For
both word and character-level language models, we find both \emph{unstable RNNs and
LSTMs also exhibit vanishing gradients}. In
Figures~\ref{fig:word_vanishing}~and~\ref{fig:char_vanishing}, we plot the
average gradient of the loss at time $t+i$ with respect to the input at time
$t$, $\norm{\grad_{x_t} p_{t+i}}$ as $t$ ranges over the training set.  
For either language modeling task, the LSTM and the RNN suffer
from limited sensitivity to distant inputs at initialization and throughout
training. The gradients of the LSTM vanish more slowly than those of the RNN,
but both models exhibit the same qualitative behavior. 
\begin{figure}[h!]%
    \centering
    \subfigure[Word-Level language modeling]{%
        \label{fig:word_vanishing}
        \includegraphics[width=0.45\textwidth]{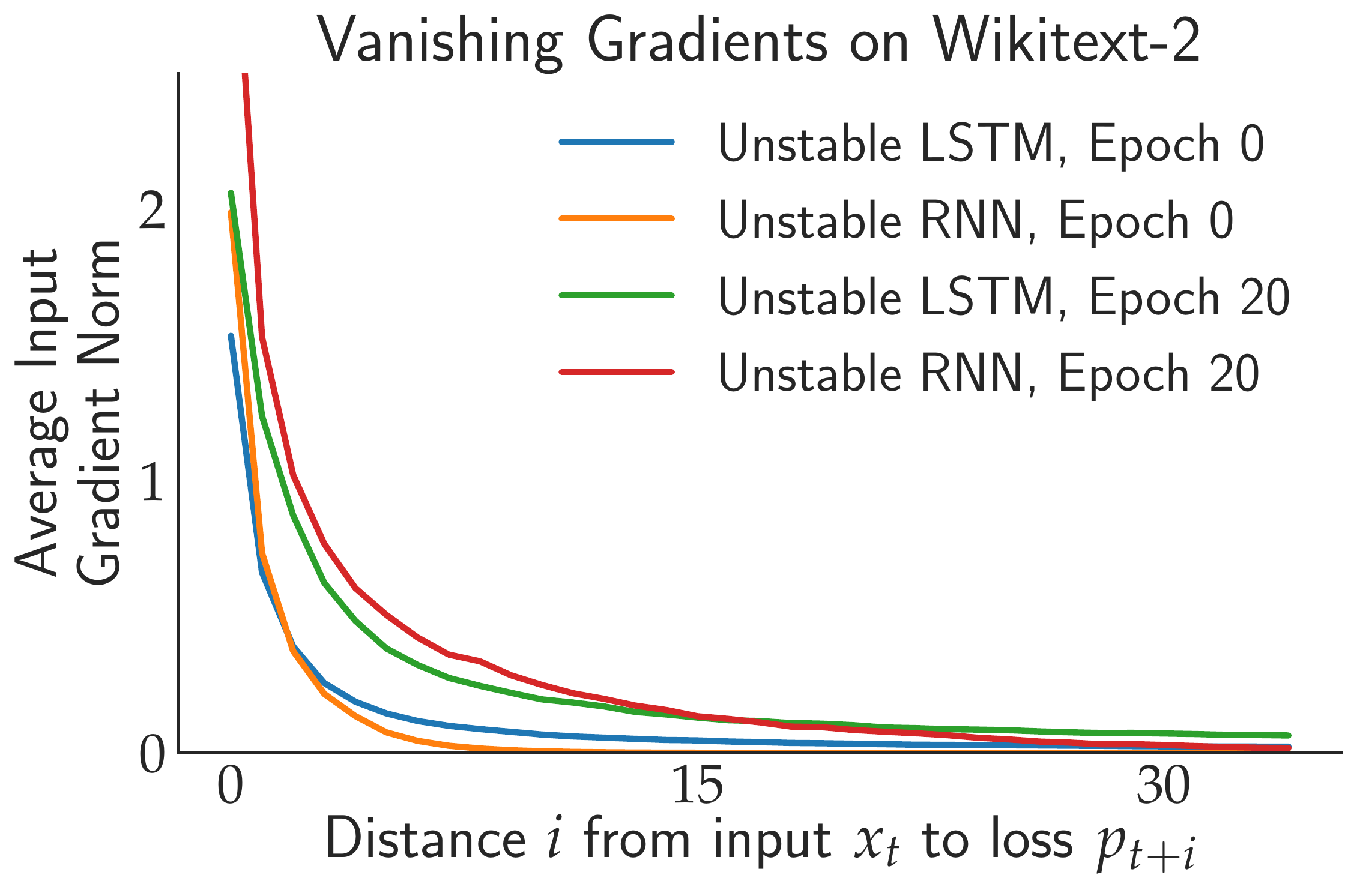}
    }
    \qquad
    \subfigure[Character-level language modeling]{%
        \label{fig:char_vanishing}
        \includegraphics[width=0.45\textwidth]{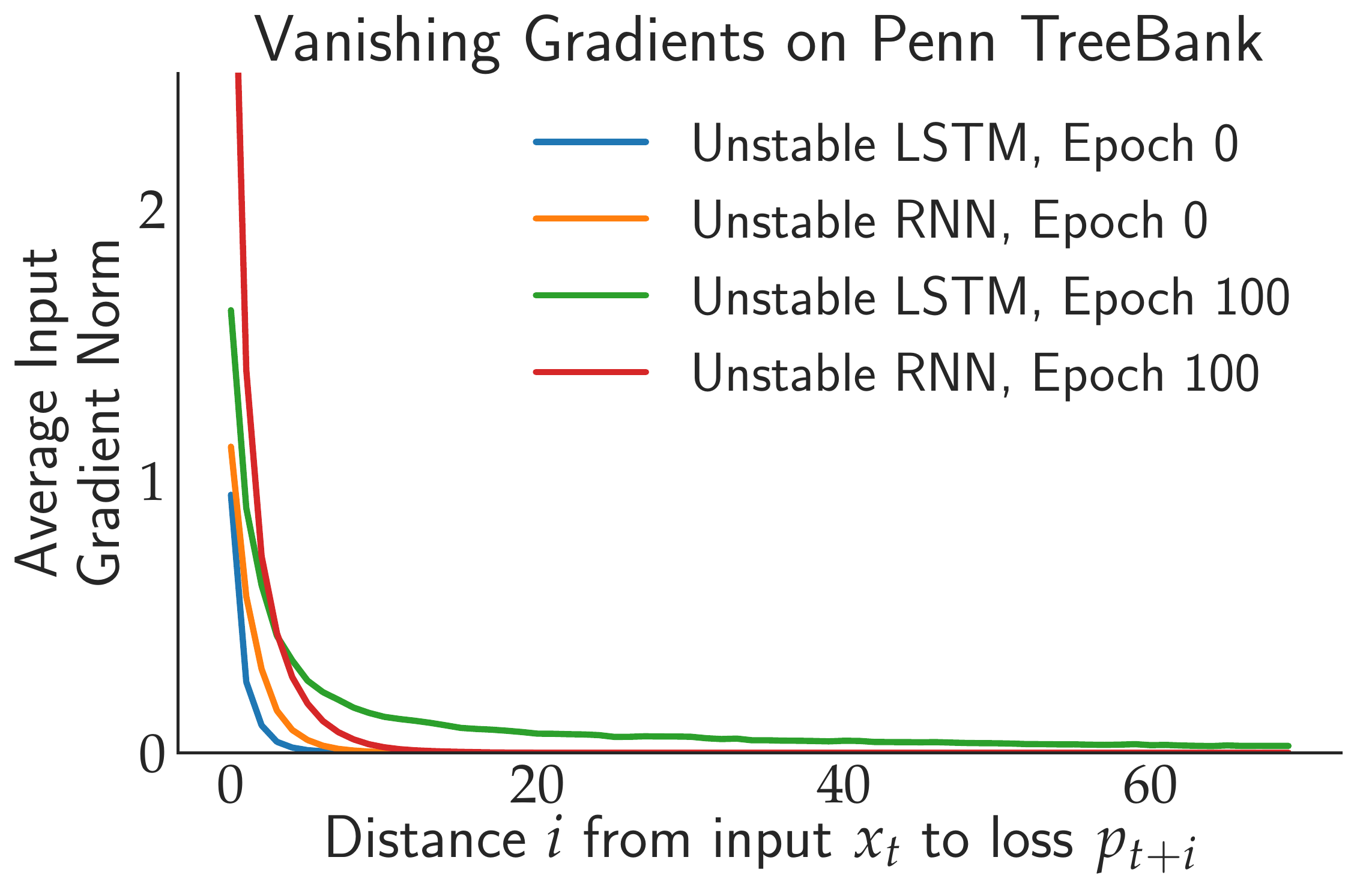}
    }
    \caption{Unstable word and character-level language models exhibit vanishing
             gradients. We plot the norm of the gradient with respect to inputs,
             $\norm{\grad_{x_t} {p_{t+i}}}$, as the distance between the input and
             the loss grows, averaged over the entire training set. The gradient
             vanishes for moderate values of $i$ for both RNNs and LSTMs,
             though the decay is slower for LSTMs.}
\end{figure}

\paragraph{Truncating Unstable Models.} 
The results in Section~\ref{sec:ffnn_approximation} show stable models can be
truncated without loss of performance. In practice, unstable models
can also be truncated without performance loss.
In Figures~\ref{fig:word_lm_trunc}~and~\ref{fig:poly_music_trunc}, we show the
performance of both LSTMs and RNNs for various values of the truncation
parameter $k$ on word-level language modeling and polyphonic music modeling.
Initially, increasing $k$ increases performance because the model can use more
context to make predictions.  However, in both cases, there is diminishing
returns to larger values of the truncation parameter $k$. LSTMs are
unaffected by longer truncation lengths, whereas the performance of RNNs
slightly degrades as $k$ becomes very large, possibly due to training
instability.  In either case, diminishing returns to performance for large
values of $k$ means truncation and therefore feed-forward approximation is
possible even for these unstable models.

\begin{figure}[h!]%
    \centering
    \subfigure[Word-level language modeling]{%
        \label{fig:word_lm_trunc}
        \includegraphics[width=0.45\textwidth]{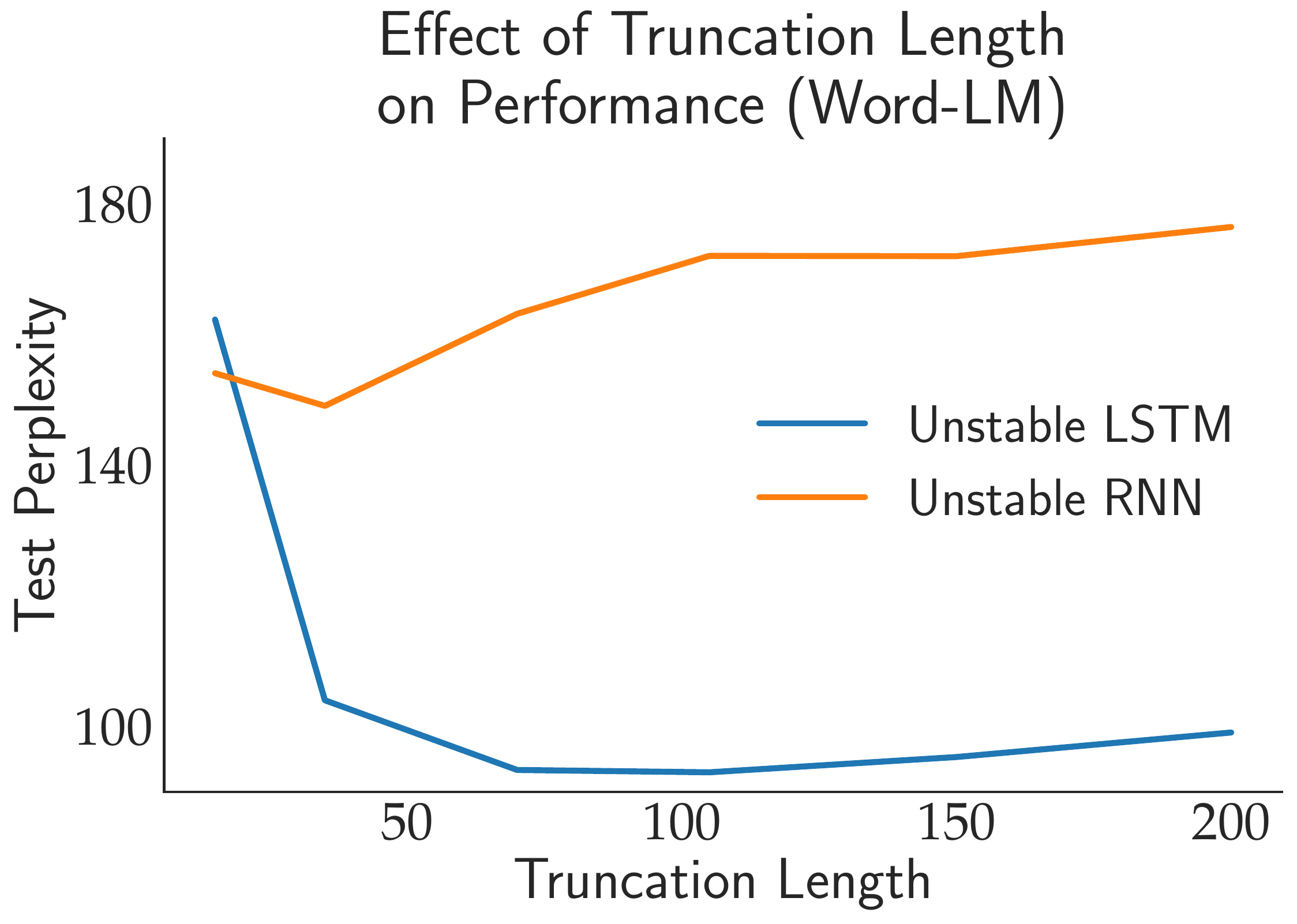}
    }
    \qquad
    \subfigure[Polyphonic music modeling]{%
        \label{fig:poly_music_trunc}
        \includegraphics[width=0.45\textwidth]{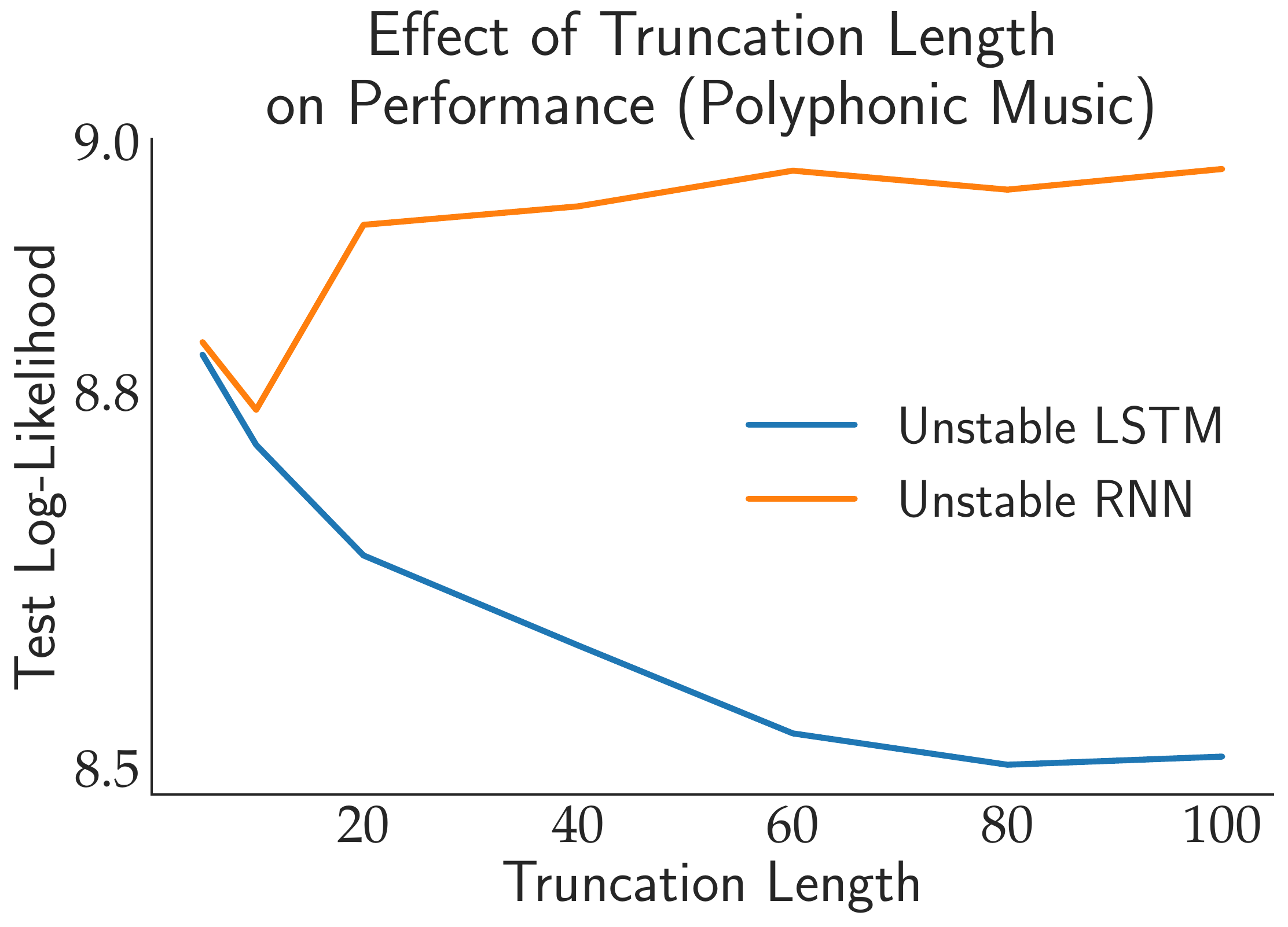}
    }
    \caption{Effect of truncating unstable models. On both language and music
        modeling, RNNs and LSTMs exhibit diminishing returns for large values of
        the truncation parameter $k$. In LSTMs, larger $k$ doesn't affect
        performance, whereas for unstable RNNs, large $k$ slightly decreases
        performance}
\end{figure}

\paragraph{Proposition~\eqref{prop:grad_descent_bound} holds for unstable
models.} In stable models, Proposition~\eqref{prop:grad_descent_bound} in
Section~\ref{sec:ffnn_approximation} ensures the distance between the weight
matrices $\norm{\rw - \tw}$ grows slowly as training progresses, and this rate
decreases as $k$ becomes large. In
Figures~\ref{fig:rnn_divergence}~and~\ref{fig:lstm_divergence}, we show a
similar result holds empirically for unstable word-level language models.  All
the models are initialized at the same point, and we track the distance
between the hidden-to-hidden matrices $W$ as training progresses. Training  the
full recurrent model is impractical, and we assume $k = 65$ well captures the
full-recurrent model. In
Figures~\ref{fig:rnn_divergence}~and~\ref{fig:lstm_divergence}, we plot
$\norm{W_k - W_{65}}$ for $k \in \set{5, 10, 15, 25, 35, 50, 64}$ throughout
training. As suggested by Proposition~\eqref{prop:grad_descent_bound}, after an
initial rapid increase in distance, $\norm{W_k -W_{65}}$ grows slowly, as
suggested by Proposition~\ref{prop:grad_descent_bound}.  Moreover, there is a
diminishing return to choosing larger values of the truncation parameter $k$ in
terms of the accuracy of the approximation.

\begin{figure}[h!]%
    \centering
    \subfigure[Unstable RNN language model]{%
        \label{fig:rnn_divergence}
        \includegraphics[width=0.45\textwidth]{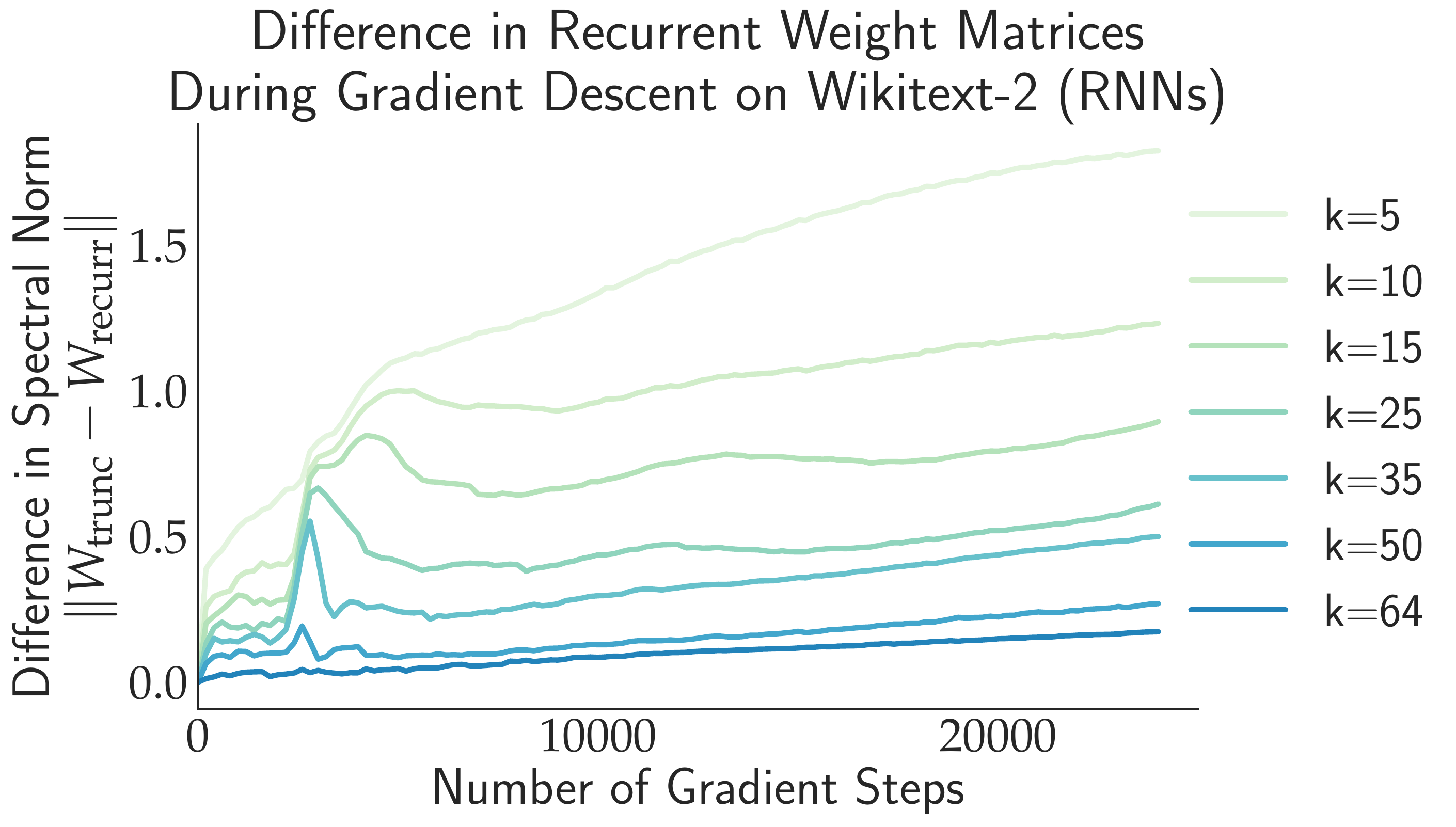}
    }
    \qquad
    \subfigure[Unstable LSTM language model]{%
        \label{fig:lstm_divergence}
        \includegraphics[width=0.45\textwidth]{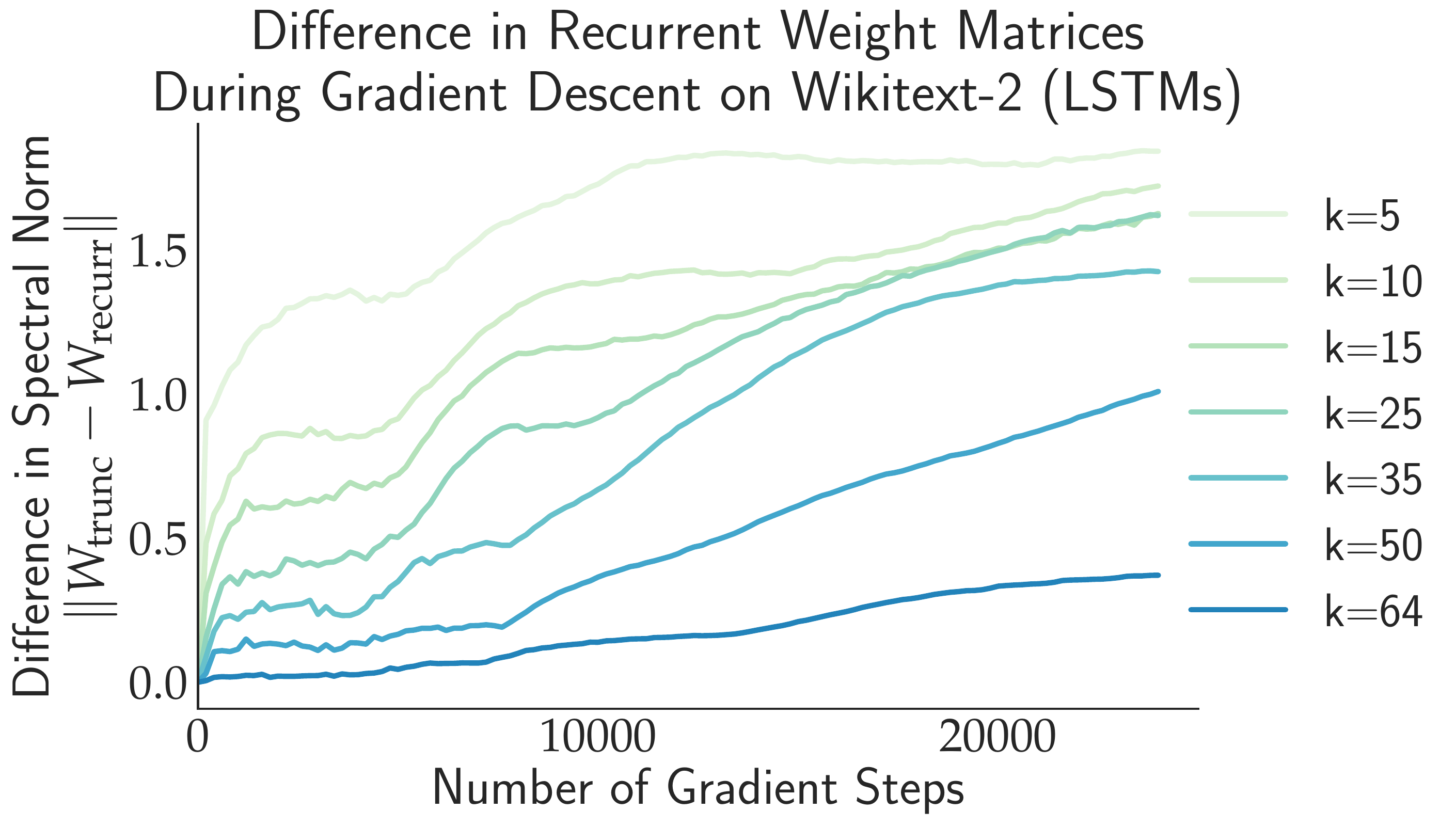}
    }
    \caption{Qualitative version of Proposition~\ref{prop:grad_descent_bound} for
        unstable, word-level language models. We assume $k=65$ well-captures the
        full-recurrent model and plot $\norm{\tw - \rw} = \norm{W_k - W_{65}}$ as training
        proceeds, where $W$ denotes the recurrent weights. As
        Proposition~\ref{prop:grad_descent_bound} suggests, this quantity
        grows slowly as training proceeds, and the rate of growth decreases as
        $k$ increases.}
\end{figure}

\section{Are recurrent models truly necessary?}
\label{sec:discussion}
Our experiments show recurrent models trained in practice operate in the
stable regime, and our theoretical results show stable recurrent models are
approximable by feed-forward networks,  As a consequence, we conjecture
\emph{recurrent networks trained in practice are always approximable by
feed-forward networks}. Even with this conjecture, we cannot yet conclude
recurrent models as commonly conceived are unnecessary. First, our present
proof techniques rely on truncated versions of recurrent models, and truncated
recurrent architectures like LSTMs may provide useful inductive bias on some
problems. Moreover, implementing the truncated approximation as a feed-forward
network increases the number of weights by a factor of $k$ over the original
recurrent model. Declaring recurrent models truly superfluous would require
both finding more parsimonious feed-forward approximations and proving
natural feed-forward models, e.g. fully connected networks or CNNs, can
approximate stable recurrent models during training. This remains an important
question for future work.

\section{Related Work}
\label{sec:related_work}
Learning dynamical systems with gradient descent has been a recent topic of
interest in the machine learning community.  \cite{hardt2018gradient} show
gradient descent can efficiently learn a class of stable, linear dynamical
systems, \cite{oymak2018stochastic} shows gradient descent learns a class of
stable, non-linear dynamical systems. Work by~\cite{sedghi2016training} gives a
moment-based approach for learning some classes of stable non-linear recurrent
neural networks. Our work explores the theoretical and empirical consequences of
the stability assumption made in these works. In particular, our empirical
results show models trained in practice can be made closer to those currently
being analyzed theoretically without large performance penalties.

For \emph{linear} dynamical systems, \cite{tu2017non} exploit the
connection between stability and truncation to learn a truncated
approximation to the full stable system.  Their approximation result is the same
as our inference result for linear dynamical systems, and we extend this
result to the non-linear setting. We also analyze the impact of truncation on
training with gradient descent.  Our training time analysis builds on the
stability analysis of gradient descent in~\cite{hardt2016train}, but
interestingly uses it for an entirely different purpose.  Results of this kind
are completely new to our knowledge.  

For RNNs, the link between vanishing and exploding gradients and
$\norm{W}$ was identified in \cite{pascanu2013difficulty}.  For 1-layer RNNs,
\cite{jin1994absolute} give sufficient conditions for stability in terms of the
norm $\norm{W}$ and the Lipschitz constant of the non-linearity. Our work
additionally considers LSTMs and provides new sufficient conditions for
stability. Moreover, we study the consequences of stability in terms of
feed-forward approximation.

A number of recent works have sought to avoid vanishing and exploding gradients
by ensuring the system is an isometry, i.e. $\lambda = 1$. In the RNN case, this
amounts to constraining $\norm{W} = 1$ \cite{arjovsky2016unitary,
wisdom2016full, jing2017tunable, mhammedi2017efficient,jose2018kronecker}.
\cite{vorontsov2017orthogonality} observes strictly requiring $\norm{W} = 1$
reduces performance on several tasks, and instead proposes maintaining $\norm{W}
\in [1-\eps, 1+\eps]$.  \cite{zhang2018stabilizing} maintains this
``soft-isometry'' constraint using a parameterization based on the SVD that
obviates the need for the projection step used in our stable-RNN experiments.
\cite{kusupati2018fastgrnn} sidestep these issues and stabilizes training using a
residual parameterization of the model.  At present, these unitary models have
not yet seen widespread use, and our work shows much of the sequence learning in
practice, even with nominally unstable models, actually occurs in the stable
regime.

From an empirical perspective,~\cite{laurent2017recurrent} introduce a
non-chaotic recurrent architecture and demonstrate it can perform as well more
complex models like LSTMs. \cite{bai2018empirical}
conduct a detailed evaluation of recurrent and convolutional, feed-forward
models on a variety of sequence modeling tasks. In diverse settings, they find
feed-forward models outperform their recurrent counterparts.  Their experiments
are complimentary to ours; we find recurrent models can often be replaced with
stable recurrent models, which we show are equivalent to feed-forward networks.

\section*{Acknowledgements}
This material is based upon work supported by the National Science Foundation
Graduate Research Fellowship Program under Grant No. DGE 1752814 and a
generous grant from the AWS Cloud Credits for Research program.

\bibliographystyle{abbrv}
\bibliography{main}

\newpage
\appendix
\section{Proofs from Section~\ref{sec:stable_rnns}}

\subsection{Gradient descent on unstable systems need not converge}
\begin{proof}[Proof of Proposition~\ref{prop:stability_grad_descent_counterex}]
Consider a scalar linear dynamical system 
\begin{align}
    \label{eq:scalar_lds1}
    h_t &= a h_{t-1} + b x_t \\
    \hat{y}_t &= h_t,
\end{align}
where $h_0 = 0$, $a, b\in \reals$ are parameters, and $x_t, y_t \in \reals$ are
elements the input-output sequence $\set{(x_t, y_t)}_{t=1}^T$, 
where $L$ is the sequence length, and $\hat{y}_t$ is the prediction at time $t$. 
Stability of the above system corresponds to $\abs{a} < 1$. 

Suppose $(x_t, y_t) = (1, 1)$ for $t=1, \dots, L$. Then the desired
system~\eqref{eq:scalar_lds1} simply computes the identity mapping. 
Suppose we use the squared-loss  $\ell(y_t, \hat{y}_t) = (1/2)(y_t -
\hat{y}_t)^2$, and suppose further $b=1$, so the problem reduces to learning $a =
0$. We first compute the gradient. Compactly write
\begin{align*}
    h_t = \sum_{i=0}^{t-1} a^t b = \paren{\frac{1 - a^t}{1-a}}.
\end{align*}
Let $\delta_t = (\hat{y}_t - y_t)$. The gradient for step $T$ is then
\begin{align*}
    \der{}{a}\ell(y_T, \hat{y}_T) 
    = \delta_T \der{h_T}{a} 
    &= \delta_T \sum_{t=0}^{T-1} a^{T-1-t}h_t \\
    &= \delta_T \sum_{t=0}^{T-1} a^{T-1-t}\paren{\frac{1 - a^t}{1-a}} \\
    &= \delta_T \brack{\frac{1}{(1-a)} \sum_{t=0}^{T-1} a^{t} - \frac{Ta^{T-1}}{(1-a)}} \\
    &= \delta_T \brack{\frac{(1-a^T)}{(1-a)^2} - \frac{Ta^{T-1}}{(1-a)}}.
\end{align*}
Plugging in $y_t = 1$, this becomes
\begin{align}
    \label{eq:scalar_grad}
    \der{}{a}\ell(y_T, \hat{y}_T) 
    =\paren{\frac{(1 - a^T)}{(1-a)} - 1}\brack{\frac{(1-a^T)}{(1-a)^2} - \frac{Ta^{T-1}}{(1-a)}}.
\end{align}
For large $T$, if $\abs{a} > 1$, then $a^L$ grows exponentially with $T$ and the gradient is
approximately
\begin{align*}
    \der{}{a}\ell(y_T, \hat{y}_T) 
    \approx \paren{a^{T-1} - 1} Ta^{T-2} \approx T a^{2T - 3}
\end{align*}
Therefore, if $a^0$ is initialized outside of $[-1, 1]$, the iterates
$a^i$ from gradient descent with step size $\alpha_i = (1/i)$ diverge,
i.e. $a^i \to \infty$, and from equation~\eqref{eq:scalar_grad}, it is clear that such
$a^i$ are not stationary points.
\end{proof}

\subsection{Proofs from section~\ref{sec:examples}}
\subsubsection{Recurrent neural networks}
Assume $\norm{W} \leq \lambda < 1$ and $\norm{U} \leq B_U$. Notice
$\tanh'(x) = 1-\tanh(x)^2$, so since $\tanh(x) \in [-1, 1]$, $\tanh(x)$ is
1-Lipschitz and $2$-smooth. We previously showed the system is stable since,
for any states $h, h'$,
\begin{align*}
    &\norm{\tanh(W h + U x) - \tanh(W h' + Ux)} \\
    &\leq \norm{Wh + Ux - W h' - Ux} \\
    &\leq \norm{W}\norm{h - h'}.
\end{align*}
Using  Lemma~\ref{lem:trunc-hidden} with $k=0$, $\norm{h_t} \leq \frac{B_U
B_x}{(1-\lambda)}$ for all $t$. Therefore, for any $W, W'$, $U, U'$, 
\begin{align*}
    &\norm{\tanh(W h_t + U x) - \tanh(W' h_t + U'x)} \\
    &\leq \norm{Wh_t + Ux - W' h_t - U'x} \\
    &\leq \sup_t \norm{h_t} \norm{W - W'} + B_x\norm{U-U'}.\\
    &\leq \frac{B_U B_x}{(1-\lambda)} \norm{W - W'} + B_x\norm{U-U'},
\end{align*}
so the model is Lipschitz in $U, W$. We can similarly argue the model is $B_U$
Lipschitz in $x$. For smoothness, the partial derivative with respect to $h$ is 
\begin{align*}
    \pder{\phi_w(h,x)}{h} = \diag(\tanh'(Wh + Ux)) W,
\end{align*}
so for any $h, h'$, bounding the $\ell_\infty$ norm with the $\ell_2$ norm,
\begin{align*}
    \norm{\pder{\phi_w(h,x)}{h} - \pder{\phi_w(h',x)}{h}}
    &= \norm{\diag(\tanh'(Wh + Ux)) W - \diag(\tanh'(Wh' + Ux)) W} \\
    &\leq \norm{W} \norm{\diag(\tanh'(Wh + Ux) - \tanh'(Wh' + Ux))} \\
    &\leq 2\norm{W} \norm{Wh + Ux - Wh' - Ux}_\infty \\
    &\leq 2\lambda^2 \norm{h - h'}.
\end{align*}
For any $W, W', U, U'$ satisfying our assumptions,
\begin{align*}
    \norm{\pder{\phi_{w}(h,x)}{h} - \pder{\phi_{w'}(h,x)}{h}}
    &= \norm{\diag(\tanh'(Wh + Ux)) W - \diag(\tanh'(W'h + U'x)) W'} \\
    &\leq \norm{\diag(\tanh'(Wh + Ux) - \tanh'(W'h + U'x))}\norm{W} \\
    &\quad + \norm{\diag(\tanh'(W'h + U'x))}\norm{W - W'} \\
    &\leq 2\lambda \norm{(W-W')h + (U - U')x}_\infty + \norm{W - W'}  \\
    &\leq 2\lambda \norm{(W-W')} \norm{h} + 2\lambda\norm{U - U'}\norm{x} + \norm{W - W'} \\
    &\leq \frac{2\lambda B_U B_x + (1-\lambda)}{(1-\lambda)} \norm{W - W'} + 2\lambda B_x \norm{U - U'}.
\end{align*}
Similar manipulations establish $\pder{\phi_w(h, x)}{w}$ is Lipschitz in $h$ and $w$.

\subsubsection{LSTMs}
Similar to the previous sections, we assume $s_0 = 0$.

The state-transition map is not Lipschitz in $s$, much less stable, unless
$\norm{c}$ is bounded. However, assuming the weights are bounded, we first prove
this is always the case.

\begin{lemma}
\label{lem:lstm-no-blow-up}
Let $\norm{f}_\infty = \sup_t \norm{f_t}_\infty$. 
If
$\norm{W_f}_\infty < \infty$,
$\norm{U_f}_\infty < \infty$, 
and 
$\norm{x_t}_\infty \leq B_x$,
then
$\norm{f}_\infty < 1$
and
$\norm{c_t}_\infty \leq \frac{1}{(1-\norm{f}_\infty)}$
for all $t$.
\end{lemma}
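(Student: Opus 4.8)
The plan is to exploit the saturating nonlinearities to break what looks like a circular dependence: bounding the forget gate $f_t$ seems to require bounding $h_{t-1}$, which in turn involves $c_{t-1}$, which is precisely the quantity we are trying to control. The key observation that resolves this is that the output $h_t = o_t \circ \tanh(c_t)$ is \emph{unconditionally} bounded in $\ell_\infty$, regardless of the magnitude of the cell state. Indeed, each coordinate of $o_t$ lies in $(0,1)$ since it is the output of $\sigma$, and each coordinate of $\tanh(c_t)$ lies in $(-1,1)$, so $\norm{h_t}_\infty \leq 1$ for every $t$ with no assumption on $c_t$. This lets me establish the bound on $\norm{f}_\infty$ first, entirely independently of any bound on the cell state.

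With $\norm{h_{t-1}}_\infty \leq 1$ in hand, I would bound the pre-activation of the forget gate by
\begin{align*}
    \norm{W_f h_{t-1} + U_f x_t}_\infty
    \leq \norm{W_f}_\infty \norm{h_{t-1}}_\infty + \norm{U_f}_\infty \norm{x_t}_\infty
    \leq \norm{W_f}_\infty + \norm{U_f}_\infty B_x =: M,
\end{align*}
which is finite by the hypotheses on the weights and inputs. Since $\sigma$ is increasing and $\sigma(M) < 1$ for any finite $M$, each coordinate of $f_t = \sigma(W_f h_{t-1} + U_f x_t)$ is at most $\sigma(M)$, giving the uniform bound $\norm{f}_\infty = \sup_t \norm{f_t}_\infty \leq \sigma(M) < 1$, as claimed.

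For the cell-state bound, I would derive a one-step recursion from $c_t = i_t \circ z_t + f_t \circ c_{t-1}$. Because $i_t \in (0,1)^d$ and $z_t \in (-1,1)^d$ coordinatewise, the input term satisfies $\norm{i_t \circ z_t}_\infty \leq 1$, while $\norm{f_t \circ c_{t-1}}_\infty \leq \norm{f}_\infty \norm{c_{t-1}}_\infty$. The triangle inequality then yields
\begin{align*}
    \norm{c_t}_\infty \leq 1 + \norm{f}_\infty \norm{c_{t-1}}_\infty.
\end{align*}
Unrolling this from the initial condition $c_0 = 0$ gives $\norm{c_t}_\infty \leq \sum_{j=0}^{t-1} \norm{f}_\infty^j$, and since $\norm{f}_\infty < 1$ was just established, this geometric sum is bounded by $\sum_{j=0}^{\infty} \norm{f}_\infty^j = (1 - \norm{f}_\infty)^{-1}$ uniformly in $t$.

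The only genuinely delicate point, and the one I would flag as the crux, is the ordering of the argument. A naive attempt to bound everything simultaneously stalls on the apparent feedback loop between $c$ and $f$; the proof goes through precisely because the $\ell_\infty$ bound $\norm{h_t}_\infty \leq 1$ holds with no prior control on $c_t$, which decouples the $\norm{f}_\infty < 1$ claim from the cell-state recursion. Everything else is a routine application of induced-norm submultiplicativity and summing a geometric series.
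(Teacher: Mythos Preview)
Your proof is correct and follows essentially the same approach as the paper: first observe that $\norm{h_t}_\infty \leq 1$ unconditionally because $o_t$ and $\tanh(c_t)$ have entries in $(0,1)$ and $(-1,1)$ respectively, use this to obtain a uniform bound $\norm{f}_\infty \leq \sigma(M) < 1$ on the forget gate, and then unroll the resulting contractive recursion $\norm{c_t}_\infty \leq 1 + \norm{f}_\infty \norm{c_{t-1}}_\infty$ from $c_0 = 0$ into a geometric series. Your explicit remark that the $\norm{h_t}_\infty \leq 1$ bound decouples the forget-gate estimate from the cell-state recursion is exactly the idea that makes the paper's argument work.
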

\begin{proof}[Proof of Lemma~\ref{lem:lstm-no-blow-up}]
Note $\abs{\tanh(x)}, \abs{\sigma(x)} \leq 1$ for all $x$. Therefore, for any
$t$, $\norm{h_t}_\infty = \norm{o_t \circ \tanh(c_t)}_\infty \leq 1$. Since
$\sigma(x) < 1$ for $x < \infty$ and $\sigma$ is monotonically increasing
\begin{align*}
    \norm{f_t}_\infty 
    &\leq \sigma\paren{\norm{W_f h_{t-1} + U_f x_t}_\infty} \\
    &\leq \sigma\paren{\norm{W_f}_\infty \norm{h_{t-1}}_\infty + \norm{U_f}_\infty \norm{x_t}_\infty} \\
    &\leq \sigma\paren{B_W + B_u B_x} \\
    &< 1.
\end{align*}
Using the trivial bound, $\norm{i_t}_\infty \leq 1$ and $\norm{z_t}_\infty \leq
1$, so
\begin{align*}
    \norm{c_{t+1}}_\infty
    = \norm{i_t\circ z_t + f_t \circ c_t}_\infty
    &\leq 1 + \norm{f_t}_\infty \norm{c_t}_\infty.
\end{align*}
Unrolling this recursion, we obtain a geometric series
\begin{align*}
    \norm{c_{t+1}}_\infty
    \leq \sum_{i=0}^{t}   \norm{f_t}_\infty^i
    \leq \frac{1}{(1- \norm{f}_\infty)}.
\end{align*}
\end{proof}

\begin{proof}[Proof of Proposition~\ref{prop:lstm-stability}]
We show $\phi_{\mathrm{LSTM}}$ is $\lambda$-contractive in the
$\ell_\infty$-norm for some $\lambda < 1$. For
$r \geq \log_{1/\lambda}(\sqrt{d})$, this in turn implies the iterated
system $\phi^r_{\mathrm{LSTM}}$ is contractive is the $\ell_2$-norm.

Consider the pair of reachable hidden states $s=(c, h)$, $s' = (c', h')$. By
Lemma~\ref{lem:lstm-no-blow-up}, $c, c'$ are bounded. Analogous to the
recurrent network case above, since $\sigma$ is $(1/4)$-Lipschitz and
$\tanh$ is 1-Lipschitz,
\begin{align*}
    \norm{i - i'} &\leq \frac{1}{4} \norm{W_i}_\infty\norm{h - h'}_\infty \\
    \norm{f - f'} &\leq \frac{1}{4} \norm{W_f}_\infty\norm{h - h'}_\infty \\
    \norm{o - o'} &\leq \frac{1}{4} \norm{W_o}_\infty\norm{h - h'}_\infty \\
    \norm{z - z'} &\leq \norm{W_z}_\infty\norm{h - h'}_\infty.
\end{align*}
Both $\norm{z}_\infty, \norm{i}_\infty \leq 1$ since they're the output of a
sigmoid. Letting $c_+$ and $c_+'$ denote the state on the next time step,
applying the triangle inequality,
\begin{align*}
    \norm{c_+ - c_+'}_\infty
    &\leq \norm{i \circ z - i' \circ z'}_\infty + \norm{f \circ c - f'\circ c'}_\infty \\
    & \leq \norm{(i - i') \circ z}_\infty + \norm{i' \circ (z - z')}_\infty
    + \norm{f \circ (c-c')}_\infty + \norm{c \circ (f-f')}_\infty \\
    &\leq \norm{i - i'}_\infty\norm{z}_\infty + \norm{z - z'}_\infty\norm{i'}_\infty
    + \norm{c-c'}_\infty\norm{f}_\infty + \norm{f-f'}_\infty\norm{c}_\infty \\
    &\leq \paren{\frac{\norm{W_i}_\infty + \norm{c}_\infty \norm{W_f}_\infty}{4} + \norm{W_z}_\infty}
    \norm{h - h'}_\infty + \norm{f}_\infty\norm{c-c'}_\infty.
\end{align*}
A similar argument shows
\begin{align*}
    \norm{h_+ - h_+'}_\infty
    \leq \norm{o - o'}_\infty + \norm{c_+ - c_+'}_\infty
    &\leq \frac{\norm{W_o}_\infty}{4} \norm{h - h'}_\infty 
    +  \norm{c_+ - c_+'}_\infty.
\end{align*}
By assumption,
\begin{align*}
    \paren{\frac{\norm{W_i}_\infty + \norm{c}_\infty \norm{W_f}_\infty + \norm{W_o}_\infty}{4} + \norm{W_z}_\infty} 
    < 1 - \norm{f}_\infty,
\end{align*}
and so
\begin{align*}
    \norm{h_+ - h_+'}_\infty
    < (1 - \norm{f}_\infty)\norm{h - h'}_\infty + \norm{f}_\infty \norm{c - c'}_\infty
    &\leq \norm{s - s'}_\infty,
\end{align*}
as well as
\begin{align*}
    \norm{c_+ - c_+'}_\infty
    < (1 - \norm{f}_\infty)\norm{h - h'}_\infty + \norm{f}_\infty \norm{c - c'}_\infty
    &\leq \norm{s - s'}_\infty,
\end{align*}
which together imply
\begin{align*}
    \norm{s_+ - s'_+}_\infty < \norm{s - s'}_\infty,
\end{align*}
establishing $\phi_{\mathrm{LSTM}}$ is contractive in the $\ell_\infty$
norm.
\end{proof}
\section{Proofs from section~\ref{sec:ffnn_approximation}}

Throughout this section, we assume the initial state $h_0 = 0$. Without loss of
generality, we also assume $\phi_w(0, 0) = 0$ for all $w$.  Otherwise, we can
reparameterize $\phi_w(h, x) \mapsto \phi_w(h, x) - \phi_w(0, 0)$ without
affecting expressivity of $\phi_w$. For stable models, we also assume there some compact, convex
domain~$\domain \subset \reals^m$ so that the map~$\phi_w$ is
stable for all choices of parameters $w \in \domain$.

\begin{proof}[Proof of Lemma~\ref{lem:trunc-hidden}]
For any $t \geq 1$, by triangle inequality,
\begin{align*}
    \norm{h_t}
    = \norm{\phi_w(h_{t-1}, x_t) - \phi_w(0, 0)}
    \leq \norm{\phi_w(h_{t-1}, x_t) - \phi_w(0, x_t)}
     + \norm{\phi_w(0, x_t) - \phi_w(0, 0)}.
\end{align*}
Applying the stability and Lipschitz assumptions and then summing a geometric series,
\begin{align*}
    \norm{h_t}
    \leq \lambda\norm{h_{t-1}} + L_x \norm{x_t}
    \leq \sum_{i=0}^t \lambda^i L_x B_x
    \leq \frac{L_x B_x}{(1-\lambda)}\,.
\end{align*}
Now, consider the difference between hidden states at time step $t$. Unrolling the
iterates $k$ steps and then using the previous display yields
\begin{align*}
    \norm{h_t - \hk_t}
    = \norm{\phi_w(h_{t-1}, x_t) - \phi_w(\hk_{t-1}, x_t)}
    \leq \lambda \norm{h_{t-1} - \hk_{t-1}}
    \leq \lambda^k \norm{h_{t-k}}
    \leq \frac{\lambda^k L_x B_x}{(1-\lambda)},
\end{align*}
and solving for $k$ gives the result.
\end{proof}

%
%
%
\subsection{Proofs from section~\ref{sec:grad_descent}}
Before proceeding, we introduce notation for our smoothness assumption.
We assume the map~$\phi_w$ satisfies four smoothness conditions:
for any reachable states $h, h'$, and any weights $w, w' \in \domain$, there are
some scalars $\beta_{ww}, \beta_{wh}, \beta_{hw}, \beta_{hh}$ such that
\begin{enumerate}
    \item  $\norm{\pder{\phi_w(h, x)}{w} - \pder{\phi_{w'}(h, x)}{w}} \leq \beta_{ww} \norm{w - w'}$.
    \item  $\norm{\pder{\phi_w(h, x)}{w} - \pder{\phi_w(h', x)}{w}} \leq \beta_{wh} \norm{h - h'}$.
    \item  $\norm{\pder{\phi_w(h, x)}{h} - \pder{\phi_{w'}(h, x)}{h}} \leq \beta_{hw} \norm{w - w'}$.
    \item  $\norm{\pder{\phi_w(h, x)}{h} - \pder{\phi_{w}(h', x)}{h}} \leq \beta_{hh} \norm{h - h'}$.
\end{enumerate}

\subsubsection{Gradient difference due to truncation is negligible}
In the section, we argue the difference in gradient with respect to the weights
between the recurrent and truncated models is $O(k \lambda^k)$. For sufficiently
large $k$ (independent of the sequence length), the impact of
truncation is therefore negligible. The proof leverages the
``vanishing-gradient'' phenomenon-- the long-term components of the gradient of
the full recurrent model quickly vanish. The remaining challenge is to show the
short-term components of the gradient are similar for the full and recurrent
models.

\begin{proof}[Proof of Lemma ~\ref{lem:truncation}]
The Jacobian of the loss with respect to the weights is
\begin{align*}
    \pder{p_T}{w}
    = \pder{p_T}{h_T} \paren{\sum_{t=0}^T \pder{h_T}{h_t} \pder{h_t}{w}},
\end{align*}
where $\pder{h_t}{w}$ is the partial derivative of $h_t$ with respect to $w$,
assuming $h_{t-1}$ is constant with respect to $w$. 
Expanding the expression for the gradient, we wish to bound
{\small
\begin{align*}
    \norm{\grad_w p_T(w) - \grad_w \pk_T(w)}
    &=\norm{\sum_{t=1}^T \paren{\pder{h_T}{h_t}\pder{h_t}{w}}^\top \grad_{h_T} p_T -
    \sum_{t=T-k+1}^T \paren{\pder{\hk_T}{\hk_t}\pder{\hk_t}{w}}^\top
    \grad_{\hk_T} \pk_T}\\
    &\leq  \norm{\sum_{t=1}^{T-k} \paren{\pder{h_T}{h_t}
    \pder{h_t}{w}}^\top \grad_{h_T} p_T} \\
    &+ \sum_{t=T-k+1}^T \norm{\paren{\pder{h_T}{h_t} \pder{h_t}{w}}^\top
    \grad_{h_T} p_T - \paren{\pder{\hk_T}{\hk_t} \pder{\hk_t}{w}}^\top
    \grad_{\hk_T} p_T}.
\end{align*}}
The first term consists of the ``long-term components'' of the gradient for the
recurrent model. The second term is the difference in the ``short-term
components'' of the gradients between the recurrent and truncated models.
We bound each of these terms separately.

For the first term, by the Lipschitz assumptions, $\norm{\grad_{h_T} p_T} \leq
L_p$ and $\norm{\grad_w h_t} \leq L_w$. Since $\phi_w$ is
$\lambda$-contractive, so $\norm{\pder{h_t}{h_{t-1}}} \leq \lambda$.
Using submultiplicavity of the spectral norm,
\begin{align*}
    \norm{\pder{p_T}{h_T}\sum_{t=0}^{T-k} \pder{p_T}{h_t}\pder{h_t}{w}}
    \leq \norm{\grad_{h_T} p_T}  \sum_{t=0}^{T-k} \norm{\prod_{i=t}^T
    \pder{h_{i}}{h_{i-1}}}\norm{\grad_w h_t} 
    \leq L_p L_w \sum_{t=0}^{T-k} \lambda^{T-t}
    \leq \lambda^k \frac{L_p L_w}{(1-\lambda)}.
\end{align*}
Focusing on the second term, by triangle inequality and smoothness,
{\small
\begin{align*}
    &\sum_{t=T-k+1}^T \norm{\paren{\pder{h_T}{h_t} 
    \pder{h_t}{w}}^\top \grad_{h_T} p_T - \paren{\pder{\hk_T}{\hk_t} \pder{\hk_t}{w}}^\top \grad_{\hk_T} p_T} \\
    \leq &\sum_{t=T-k+1}^T\norm{\grad_{h_T} p_T - \grad_{\hk_T} \pk_T} \norm{\pder{\hk_T}{\hk_t} \pder{\hk_t}{w}}
     + \norm{\grad_{h_T} p_T} \norm{\pder{h_T}{h_t} \pder{h_t}{w} - \pder{\hk_T}{\hk_t} \pder{\hk_t}{w}} \\
    \leq &\sum_{t=T-k+1}^T \underbrace{\beta_p \norm{h_T - \hk_T}\lambda^{T-t}L_w}_{(a)} + 
     \underbrace{L_p \norm{\pder{h_T}{h_t} \pder{h_t}{w} - \pder{\hk_T}{\hk_t} \pder{\hk_t}{w}}}_{(b)}.
\end{align*}}
Using Lemma~\ref{lem:trunc-hidden} to upper bound (a),
\begin{align*}
    \sum_{t=T-k}^T \beta_p \norm{h_T - \hk_T}\lambda^{T-t}L_w
    \leq \sum_{t=T-k}^T \lambda^{T-t} \frac{\lambda^k \beta_p L_w L_x B_x}{(1-\lambda)}
    \leq\frac{\lambda^k \beta_p L_w L_x B_x}{(1-\lambda)^2}.
\end{align*}
Using the triangle inequality, Lipschitz and smoothness, (b) is bounded by
\begin{align*}
    &\sum_{t=T-k+1}^T L_p \norm{\pder{h_T}{h_t} \pder{h_t}{w} -
    \pder{\hk_T}{\hk_t} \pder{\hk_t}{w}} \\
    &\leq \sum_{t=T-k+1}^T L_p\norm{\pder{h_T}{h_t}}\norm{\pder{h_t}{w} - \pder{\hk_t}{w}}
    + L_p\norm{\pder{\hk_t}{w}}\norm{\pder{h_T}{h_t} - \pder{\hk_T}{\hk_t}} \\
    &\leq \sum_{t=T-k+1}^T L_p \lambda^{T-t}\beta_{wh}\norm{h_t - \hk_t}
    + L_p L_w \norm{\pder{h_T}{h_t} - \pder{\hk_T}{\hk_t}} \\
    &\leq k\lambda^k \frac{L_p \beta_{wh} L_x B_x}{(1-\lambda)} +
    \underbrace{L_p L_w \sum_{t=T-k+1}^T\norm{\pder{h_T}{h_t} - \pder{\hk_T}{\hk_t}}}_{(c)},
\end{align*}
where the last line used 
$\norm{h_t - \hk_t} \leq \lambda^{t-(T-k)} \frac{L_x B_x}{(1-\lambda)}$ 
for $t \geq T-k$.
It remains to bound (c), the difference of the hidden-to-hidden Jacobians.
Peeling off one term at a time and applying triangle inequality, for any
$t \geq T-k+1$,
{\small
\begin{align*}
    \norm{\pder{h_T}{h_t} - \pder{\hk_T}{\hk_t}}
    &\leq\norm{\pder{h_T}{h_{T-1}} - \pder{\hk_T}{\hk_{T-1}}} \norm{\pder{h_{T-1}}{h_{t}}}
    + \norm{\pder{\hk_T}{\hk_{T-1}}} \norm{\pder{h_{T-1}}{h_t} - \pder{\hk_{T-1}}{\hk_t}} \\
    &\leq \beta_{hh}\norm{h_{T-1} - h_{T-1}}\lambda^{T-t-1}
    + \lambda\norm{\pder{h_{T-1}}{h_t} - \pder{\hk_{T-1}}{\hk_t}} \\
    &\leq \sum_{i=t}^{T-1} \beta_{hh} \lambda^{T-t-1} \norm{h_{i} - \hk_{i}} \\
    &\leq  \lambda^k \frac{\beta_{hh} L_x B_x}{(1-\lambda)}\sum_{i=t}^{T-1} \lambda^{i-t} \\
    &\leq \lambda^k \frac{\beta_{hh} L_x B_x}{(1-\lambda)^2},
\end{align*}}
so (c) is bounded by 
$k\lambda^k \frac{L_p L_w \beta_{hh} L_x B_x}{(1-\lambda)^2}$. Ignoring
Lipschitz and smoothness constants, we've shown
the entire sum is $O\paren{\frac{k\lambda^k}{(1-\lambda)^2}}$.
\end{proof}

\subsubsection{Stable recurrent models are smooth}
In this section, we prove that the gradient map $\grad_w p_T$ is Lipschitz.
First, we show on the forward pass, the difference between hidden states
$h_t(w)$ and $\th_t(w')$ obtained by running the model with weights $w$ and
$w'$, respectively, is bounded in terms of $\norm{w - w'}$. Using smoothness of
$\phi$, the difference in gradients can be written in terms of $\norm{h_t(w) -
\th_t(w')}$, which in turn can be bounded in terms of $\norm{w - w'}$. We
repeatedly leverage this fact to conclude the total difference in gradients must
be similarly bounded.

We first show small differences in weights don't significantly change the
trajectory of the recurrent model.
\begin{lemma} 
    \label{lem:smooth-hidden}
    For some $w, w'$, suppose $\phi_w, \phi_{w'}$ are $\lambda$-contractive and
    $L_w$ Lipschitz in $w$.  Let $h_t(w), h_t(w')$ be the hidden state at time
    $t$ obtain from running the model with weights $w, w'$ on common inputs
    $\set{x_t}$. If $h_0(w) = h_0(w')$, then
    \begin{align*}
        \norm{h_t(w) - h_t(w')} \leq \frac{L_w \norm{w - w'} }{(1-\lambda)}.
    \end{align*}
\end{lemma}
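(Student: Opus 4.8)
The plan is to establish a one-step recursion for the quantity $a_t = \norm{h_t(w) - h_t(w')}$ and then unroll it. Since both trajectories are driven by the common input $x_t$ and started from the same state, the only source of divergence is the difference in weights, which enters one step at a time and is then damped by the contraction.

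First I would decompose the one-step difference using the triangle inequality, inserting the hybrid term $\phi_w(h_{t-1}(w'), x_t)$ that runs the \emph{new} weights $w$ on the \emph{old} trajectory's previous state:
\begin{align*}
    \norm{h_t(w) - h_t(w')}
    &\leq \norm{\phi_w(h_{t-1}(w), x_t) - \phi_w(h_{t-1}(w'), x_t)} \\
    &\quad + \norm{\phi_w(h_{t-1}(w'), x_t) - \phi_{w'}(h_{t-1}(w'), x_t)}.
\end{align*}
The point of this splitting is that the first term compares $\phi_w$ at two different states, which is handled by $\lambda$-contractivity in $h$, while the second compares $\phi_w$ and $\phi_{w'}$ at a common state, which is handled by the $L_w$-Lipschitz-in-$w$ assumption. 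Applying both assumptions yields the recursion
\begin{align*}
    a_t \leq \lambda\, a_{t-1} + L_w \norm{w - w'}.
\end{align*}

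Since $h_0(w) = h_0(w')$ gives $a_0 = 0$, unrolling this recursion produces a geometric series,
\begin{align*}
    a_t \leq L_w \norm{w - w'} \sum_{i=0}^{t-1} \lambda^i \leq \frac{L_w \norm{w - w'}}{1 - \lambda},
\end{align*}
which is exactly the claimed bound, the final inequality being uniform in $t$ because $\lambda < 1$.

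There is no genuine obstacle here; the argument is a routine contraction-plus-perturbation estimate of the same flavor as the proof of Lemma~\ref{lem:trunc-hidden}. The only point requiring a moment of care is the choice of which argument to hold fixed in the intermediate hybrid term, so that each of the two assumptions applies cleanly to its respective summand: inserting $\phi_w(h_{t-1}(w'), x_t)$ rather than $\phi_{w'}(h_{t-1}(w), x_t)$ is what makes the contraction factor multiply $a_{t-1}$ while the Lipschitz-in-$w$ term stays a constant increment at every step, so that summing over time produces the geometric damping rather than accumulation.
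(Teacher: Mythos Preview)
Your proof is correct and follows essentially the same approach as the paper: a triangle-inequality split into a contraction term and a Lipschitz-in-$w$ term, yielding the recursion $a_t \le \lambda a_{t-1} + L_w\norm{w-w'}$, which is then summed as a geometric series. The only cosmetic difference is that the paper inserts the hybrid $\phi_{w'}(h_{t-1}(w), x_t)$ rather than your $\phi_w(h_{t-1}(w'), x_t)$; since both $\phi_w$ and $\phi_{w'}$ are assumed $\lambda$-contractive and $L_w$-Lipschitz, either choice works, so your closing remark that this particular choice is essential is a slight overstatement.
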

\begin{proof}
By triangle inequality, followed by the Lipschitz and contractivity assumptions,
\begin{align*}
    &\norm{h_t(w) - h_t(w')} \\
    &= \norm{\phi_w(h_{t-1}(w), x_t) - \phi_{w'}(h_{t-1}(w'), x_t)} \\
    &\leq \norm{\phi_w(h_{t-1}(w), x_t) - \phi_{w'}(h_{t-1}(w), x_t)}
    + \norm{\phi_{w'}(h_{t-1}(w), x_t) - \phi_{w'}(h_{t-1}(w'), x_t)} \\
    &\leq L_w\norm{w - w'} + \lambda \norm{h_{t-1}(w) - h_{t-1}(w')}.
\end{align*}
Iterating this argument and then using $h_0(w) = h_0(w')$, we obtain a geometric series in $\lambda$.
\begin{align*}
    \norm{h_t(w) - h_t(w')} 
    &\leq L_w\norm{w - w'} + \lambda \norm{h_{t-1}(w) - h_{t-1}(w')} \\
    &\leq \sum_{i=0}^t L_w \norm{w - w'} \lambda^i \\
    &\leq \frac{L_w \norm{w - w'}}{(1-\lambda)}\,.\qedhere
\end{align*}
\end{proof}

The proof of Lemma~\ref{lem:smoothness} is similar in structure to
Lemma~\ref{lem:truncation}, and follows from repeatedly using smoothness of
$\phi$ and Lemma~\ref{lem:smooth-hidden}.
\begin{proof}[Proof of Lemma \ref{lem:smoothness}]
Let $\th_t = h_t(w')$. Expanding the gradients and using
$\norm{h_t(w) - h_t(w')} \leq \frac{L_w \norm{w - w'} }{(1-\lambda)}$
from Lemma~\ref{lem:smooth-hidden}.
\begin{align*}
    &\norm{\grad_w p_{T}(w) - \grad_{w} p_T(w')} \\
    &\leq \sum_{t=1}^T \norm{\paren{\pder{h_T}{h_{t}} \pder{h_{t}}{w}}^\top \grad_{h_T} p_T 
    - \paren{\pder{\th_T}{\th_{t}} \pder{\th_{t}}{w}}^\top \grad_{\th_T} p_T}\\
    &\leq \sum_{t=1}^T \norm{\grad_{h_T} p_T - \grad_{\th_T} p_T}
    \norm{\pder{\th_T}{\th_t} \pder{\th_t}{w}} 
    + \norm{\grad_{h_T} p_T} \norm{\pder{h_T}{h_t} \pder{h_t}{w} - \pder{\th_T}{\th_t} \pder{\th_t}{w}} \\
    &\leq \sum_{t=1}^T \beta_p \norm{h_T - \th_T} \lambda^{T-t} L_w
    + L_p \norm{\pder{h_T}{h_t} \pder{h_t}{w} - \pder{\th_T}{\th_t} \pder{\th_t}{w}} \\
    &\leq \frac{\beta_p L_w^2 \norm{w-w'}}{(1-\lambda)^2} 
    + \underbrace{L_p \sum_{t=1}^T \norm{\pder{h_T}{h_t} \pder{h_t}{w} - \pder{\th_T}{\th_t} \pder{\th_t}{w}}}_{(a)}.
\end{align*}
Focusing on term (a),
\begin{align*}
    &L_p\sum_{t=1}^T \norm{\pder{h_T}{h_t} \pder{h_t}{w} - \pder{\th_T}{\th_t}
    \pder{\th_t}{w}} \\
    &\leq L_p \sum_{t=1}^T \norm{\pder{h_{T}}{h_t}  - \pder{\th_T}{\th_t}}\norm{\pder{h_t}{w}} 
    + L_p\norm{\pder{\th_{T}}{\th_t}} \norm{\pder{h_t}{w} - \pder{\th_t}{w}} \\
    &\leq L_pL_w\sum_{t=1}^T \norm{\pder{h_{T}}{h_t}  - \pder{\th_T}{\th_t}} 
    + L_p \sum_{t=1}^T\lambda^{T-t}\paren{\beta_{wh} \norm{h_t - \th_t} + \beta_{ww}\norm{w - w'}} \\
    &\leq \underbrace{L_pL_w\sum_{t=1}^T \norm{\pder{h_{T}}{h_t}  - \pder{\th_T}{\th_t}}}_{(b)}
    + \frac{L_p \beta_{wh} L_w \norm{w - w'}}{(1-\lambda)^2} + \frac{L_p \beta_{ww} \norm{w - w'}}{(1-\lambda)},
\end{align*}
where the penultimate line used,
\begin{align*}
    \norm{\pder{h_t}{w} - \pder{\th_t}{w}}
    &\leq \norm{\pder{\phi_w(h_{t-1}, x_t)}{w} - \pder{\phi_w(\th_{t-1}, x_t)}{w}}
     + \norm{\pder{\phi_w(\th_{t-1}, x_t)}{w} - \pder{\phi_{w'}(\th_{t-1}, x_t)}{w}} \\
    &\leq \beta_{wh} \norm{h - h'} + \beta_{ww}\norm{w-w'}.
\end{align*}
To bound (b), we peel off terms one by one using the triangle inequality,
\begin{align*}
    &L_p L_w \sum_{t=1}^T \norm{\pder{h_{T}}{h_t}  - \pder{\th_T}{\th_t}} \\
    &\leq L_p L_w\sum_{t=1}^T \norm{\pder{h_T}{h_{T-1}} - \pder{\th_T}{\th_{T-1}}}\norm{\pder{h_{T-1}}{h_t}} 
    + \norm{\pder{\th_{T}}{\th_{T-1}}}\norm{\pder{h_{T-1}}{h_t}  - \pder{\th_{T-1}}{\th_t}} \\
    &\leq L_p L_w \sum_{t=1}^T \brack{\paren{\beta_{hh} \norm{h_{T-1} - \th_{T-1}}
    + \beta_{hw}\norm{w - w'}}\lambda^{T-t-1} + \lambda \norm{\pder{h_{T-1}}{h_t}  - \pder{\th_{T-1}}{\th_t}}} \\
    &\leq L_p L_w \sum_{t=1}^T \brack{\beta_{hw} (T-t)\lambda^{T-t-1}\norm{w-w'}
    + \beta_{hh} \sum_{i=1}^{T-t} \norm{h_{T-i}- \th_{T-i}} \lambda^{T-t-1}} \\
    &\leq L_p L_w \sum_{t=1}^T \brack{\beta_{hw} (T-t)\lambda^{T-t-1}\norm{w-w'}
    + \frac{\beta_{hh}L_w\norm{w-w'}}{(1-\lambda)} (T-t)\lambda^{T-t-1}} \\
    &\leq \frac{L_pL_w \beta_{hw} \norm{w - w'}}{(1-\lambda)^2} 
    + \frac{L_p L_w^2 \beta_{hh}\norm{w - w'}}{(1-\lambda)^3}.
\end{align*}
Supressing Lipschitz and smoothness constants, we've shown
the entire sum is $O(1/(1-\lambda)^{3})$, as required.
\end{proof}

\subsubsection{Gradient descent analysis}
Equipped with the smoothness and truncation lemmas
(Lemmas~\ref{lem:truncation}~and~\ref{lem:smoothness}), we turn
towards proving the main gradient descent result.
\begin{proof}[Proof of Proposition~\ref{prop:grad_descent_bound}]
Let $\Pi_{\domain}$ denote the Euclidean projection onto $\domain$, and let
$\delta_i = \norm{\rw^i - \tw^i}$. Initially $\delta_0 = 0$, and on step
$i+1$, we have the following recurrence relation for $\delta_{i+1}$,
\begin{align*}
    \delta_{i+1}
    &= \norm{\rw^{i+1} - \tw^{i+1}} \\
    &= \norm{\Pi_{\domain}(\rw^i - \alpha_i \grad p_T(w^i)) - \Pi_{\domain}(\tw^i - \alpha_i \grad \pk_T(\tw^i))} \\
    &\leq \norm{\rw^i - \alpha_i \grad p_T(w^i)) - \tw^i - \alpha_i \grad \pk_T(\tw^i)} \\
    &\leq \norm{\rw^i - \tw^i} + \alpha_i \norm{\grad p_T(\rw^i) - \grad \pk_T(\tw^i)} \\
    &\leq \delta_i + \alpha_i \norm{\grad p_T(\rw^i) - \grad p_T(\tw^i)}
    + \alpha_i\norm{\grad p_T(\tw^i) - \grad \pk_T(\tw^i)} \\
    &\leq \delta_i + \alpha_i\paren{\beta \delta_i + \gamma k \lambda^k} \\
    &\leq \exp\paren{\alpha_i \beta}\delta_i + \alpha_i \gamma k \lambda^k,
\end{align*}
the penultimate line applied lemmas~\ref{lem:truncation}
and~\ref{lem:smoothness}, and the last line used $1+x \leq e^x$ for all
$x$. Unwinding the recurrence relation at step $N$,
\begin{align*}
    \delta_N
    &\leq \sum_{i=1}^N \set{\prod_{j=i+1}^N \exp(\alpha_j \beta)}\alpha_i \gamma
    k \lambda^k  \\
    &\leq \sum_{i=1}^N \set{\prod_{j=i+1}^N \exp\paren{\frac{\alpha \beta}{j}}}
    \frac{\alpha \gamma k \lambda^k}{i} \\
    &= \sum_{i=1}^N \set{\exp\paren{\alpha \beta \sum_{j=i+1}^N \frac{1}{j}}} \frac{\alpha \gamma k \lambda^k}{i}.
\end{align*}
Bounding the inner summation via an integral, $\sum_{j=i+1}^N \frac{1}{j}
\leq \log(N/i)$ and simplifying the resulting expression,
\begin{align*}
    \delta_N
    &\leq \sum_{i=1}^N \exp(\alpha \beta \log(N/i))\frac{\alpha \gamma k
    \lambda^k}{i} \\
    &= \alpha \gamma k \lambda^k N^{\alpha \beta} \sum_{i=1}^N
    \frac{1}{i^{\alpha \beta + 1}} \\
    &\leq \alpha \gamma k \lambda^k N^{\alpha \beta + 1}\,.
\end{align*}
\end{proof}

\subsubsection{Proof of theorem~\ref{thm:main-formal}}
\begin{proof}[Proof of Theorem~\ref{thm:main-formal}]
Using $f$ is $L_f$-Lipschitz and the triangle inequality,
\begin{align*}
    \norm{y_T - \yk_T}
    &\leq L_f \norm{h_T(\rw^N) - \hk_T({\tw^N})} \\ 
    &\leq L_f \norm{h_T(\rw^N) -  h_T(\tw^N)} 
    + L_f \norm{h_T(\tw^N) - \hk_T({\tw^N})}.
\end{align*}
By Lemma~\ref{lem:smooth-hidden}, the first term is bounded by $\frac{L_w
\norm{\rw^N - \tw^N}}{(1-\lambda)}$, and by Lemma~\ref{lem:trunc-hidden}, 
the second term is bounded by $\lambda^k \frac{L_x B_x}{(1-\lambda)}$. 
Using Proposition~\ref{prop:grad_descent_bound}, after $N$ steps of gradient
descent, we have
\begin{align*}
    \norm{y_T - \yk_T}
    &\leq \frac{L_f L_w \norm{\rw^N - \tw^N}}{(1-\lambda)}
    + \lambda^k\frac{L_f L_x B_x}{(1-\lambda)}  \\
    &\leq k \lambda^k \frac{\alpha L_f L_w N^{\alpha \beta + 1}}{(1-\lambda)}
    + \lambda^k\frac{L_f L_x B_x}{(1-\lambda)},
\end{align*}
and solving for $k$ such that both terms are less than $\eps/2$ gives the
result.
\end{proof}

\section{Experiments}

\paragraph{The $O(1/t)$ rate may be necessary.}
The key result underlying Theorem~\ref{thm:main-formal} is the bound on the
parameter difference $\norm{\tw - \rw}$ while running gradient descent obtained
in Proposition~\ref{prop:grad_descent_bound}. We show this bound has the correct
qualitative scaling using random instances and training randomly initialized,
stable linear dynamical systems and $\tanh$-RNNs. In
Figure~\ref{fig:synthetic_grad_descent_bounds}, we plot the parameter error
$\norm{\tw^t - \rw^t}$ as training progresses for both models (averaged over 10
runs). The error scales comparably with the bound given in
Proposition~\ref{prop:grad_descent_bound}. We also find for larger step-sizes
like $\alpha/\sqrt{t}$ or constant $\alpha$, the bound fails to hold, suggesting
the $O(1/t)$ condition is necessary.

\begin{figure}%
    \centering \includegraphics[width=0.5\textwidth]{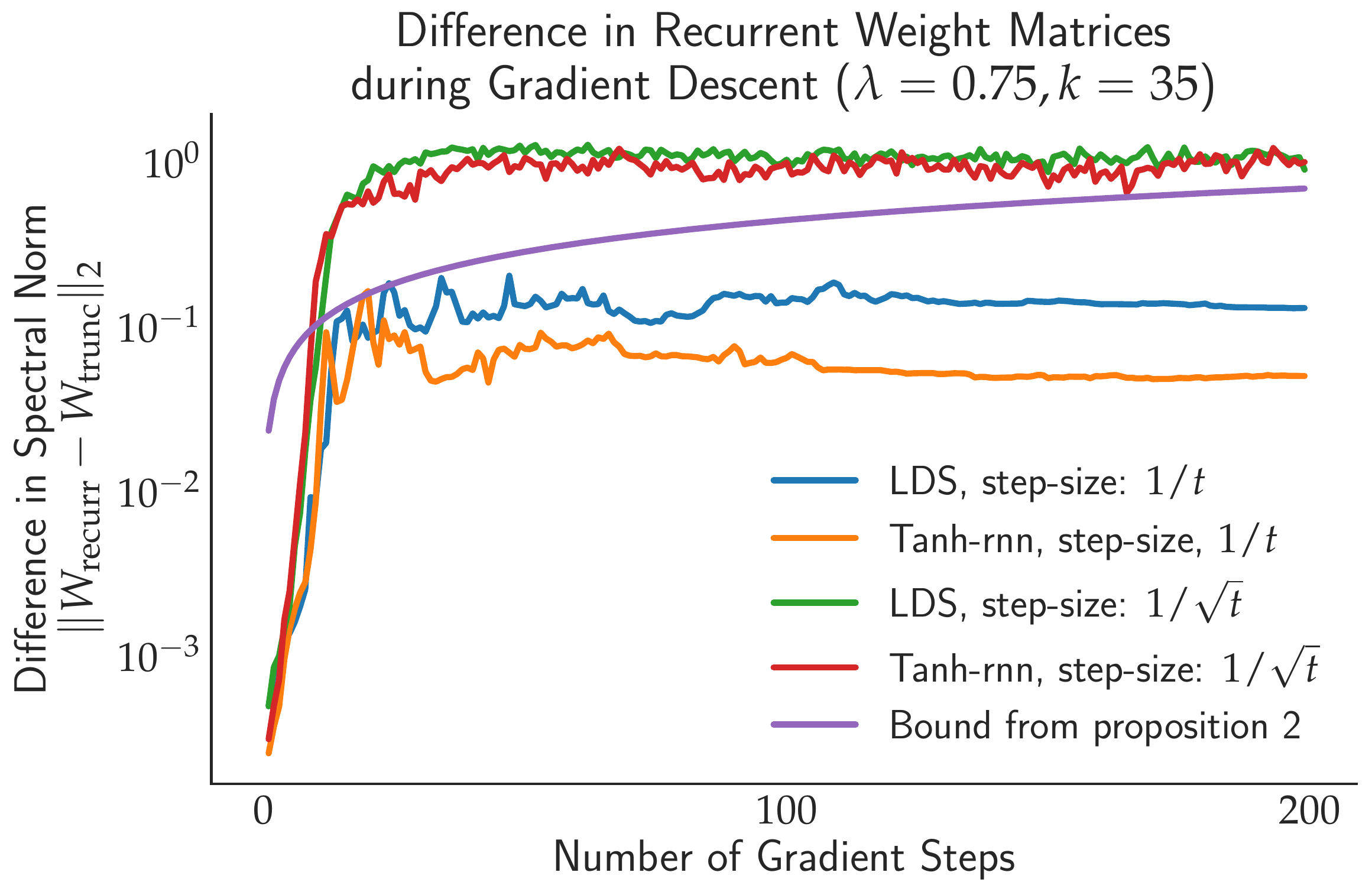}
    \caption{
        Empirical validation Proposition~\ref{prop:grad_descent_bound} on random
        Gaussian instances. Without the $1/t$ rate, the gradient descent bound
        no longer appears qualitatively correct, suggesting the $O(1/t)$ rate is
        necessary.}
    \label{fig:synthetic_grad_descent_bounds}%
\end{figure}

Concretely, we generate random problem instance by fixing a sequence length $T=200$,
sampling input data $x_t \simiid \normal{0}{4\cdot I_{32}}$, and sampling $y_T
\sim \mbox{Unif}[-2, 2]$.  Next, we set $\lambda = 0.75$ and randomly initialize
a stable linear dynamical system or RNN with $\tanh$ non-linearity by sampling
$U_{ij}, W_{ij} \simiid \normal{0}{0.5}$ and thresholding the singular values of
$W$ so $\norm{W} \leq \lambda$. We use the squared loss and prediction function
$f(h_t, x_t) = Ch_t + Dx_t$, where $C, D \simiid \normal{0}{I_{32}}$.  We fix
the truncation length to $k=35$, set the learning rate to $\alpha_t = \alpha/t$
for $\alpha = 0.01$, and take $N=200$ gradient steps.  
These parameters are chosen so that the $\gamma k\lambda^k N^{\alpha \beta + 1}$
bound from Proposition~\ref{prop:grad_descent_bound} does not become vacuous --
by triangle inequality, we always have $\norm{\tw - \rw} \leq 2\lambda$. 

\paragraph{Stable vs. unstable models.}
The word and character level language modeling experiments are based on
publically available code from \cite{merityRegOpt}. The polyphonic music modeling code is
based on the code in \cite{bai2018empirical},
and the slot-filling model is a reimplementation of \cite{mesnil2015using}
\footnote{The word-level
language modeling code is based on \url{https://github.com/pytorch/examples/tree/master/word\_language\_model},
the character-level code is based on
\url{https://github.com/salesforce/awd-lstm-lm}, and the polyphonic music
modeling code is based on \url{https://github.com/locuslab/TCN}.}

Since the sufficient conditions for stability derived in
Section~\ref{sec:examples} only apply for networks with a single layer, we use a
single layer RNN or LSTM for all experiments. Further, our theoretical results
are only applicable for vanilla SGD, and not adaptive gradient methods, so all
models are trained with SGD. Table~\ref{table:hyperparameters} contains a
summary of all the hyperparameters for each experiment.

\begin{table}
\caption{Hyperparameters for all experiments}
\label{table:hyperparameters}
\begin{center}
\begin{tabular}{llll}
    &&\multicolumn{2}{c}{\bf Model} \\
    &&\multicolumn{1}{c}{RNN} &\multicolumn{1}{c}{LSTM} \\
    \hline \\
    {\bf Word LM} & Number layers & 1 & 1 \\
                  & Hidden units  & 256 & 1024 \\
                  & Embedding size & 1024 & 512 \\
                  & Dropout        & 0.25 & 0.65 \\
                  & Batch size & 20 & 20 \\
                  & Learning rate & 2.0 & 20. \\
                  & BPTT          & 35 & 35 \\
                  & Gradient clipping & 0.25 & 1.0 \\
                  & Epochs & 40 & 40 \\
    {\bf Char LM} & Number layers & 1 & 1 \\
                  & Hidden units  & 768 & 1024 \\
                  & Embedding size & 400 &  400\\
                  & Dropout        & 0.1 & 0.1 \\
                  & Weight decay        & 1e-6 & 1e-6 \\
                  & Batch size & 80 & 80 \\
                  & Learning rate & 2.0 & 20.0 \\
                  & BPTT          & 150 & 150 \\
                  & Gradient clipping & 1.0 & 1.0 \\
                  & Epochs & 300 & 300 \\
    {\bf Polyphonic Music} & Number layers & 1 & 1 \\
                  & Hidden units  & 1024 & 1024 \\
                  & Dropout        & 0.1 & 0.1 \\
                  & Batch size & 1 & 1 \\
                  & Learning rate & 0.05 & 2.0 \\
                  & Gradient clipping & 5.0 & 5.0 \\
                  & Epochs & 100 & 100 \\
    {\bf Slot-Filling} & Number layers & 1 & 1 \\
                  & Hidden units  & 128 & 128 \\
                  & Embedding size & 64 &  64\\
                  & Dropout        & 0.5 & 0.5 \\
                  & Weight decay        & 1e-4 & 1e-4 \\
                  & Batch size & 128 & 128 \\
                  & Learning rate & 10.0 & 10.0 \\
                  & Gradient clipping & 1.0 & 1.0 \\
                  & Epochs & 100 & 100 \\
\end{tabular}
\end{center}
\end{table}

All hyperparameters are shared between the stable and unstable variants of both
models. In the RNN case, enforcing stability is conceptually simple, though
computationally expensive. Since $\tanh$ is 1-Lipschitz, the RNN is stable as
long as $\norm{W} < 1$. Therefore, after each gradient update, we project $W$
onto the spectral norm ball by taking the SVD and thresholding the singular
values to lie in $[0, 1)$. In the LSTM case, enforcing stability is conceptually
more difficult, but computationally simple. To ensure the LSTM is stable, we
appeal to Proposition~\ref{prop:lstm-stability}. We enforce the following inequalities after
each gradient update
\begin{enumerate}
\item The hidden-to-hidden forget gate matrix should satisfy $\norm{W_f}_\infty
< 0.128$, which is enforced by normalizing the $\ell_1$-
norm of each row to have value at most $0.128$. 
\item The input vectors $x_t$ must satisfy $\norm{x_t}_\infty \leq B_x = 0.75$,
which is achieved by thresholding all values to lie in $[-0.75, 0.75]$.
\item The bias of the forget gate $b_f$, must satsify $\norm{b_f}_\infty \leq
0.25$, which is again achieved by thresholding all values to lie in $[-0.25,
0.25]$.
\item The input-hidden forget gate matrix $U_f$ should satisfy
$\norm{U_f}_\infty \leq 0.25$. This is enforced by normalizing the $\ell_1$-
norm of each row to have value at most $0.25$. 
\item Given 1-4, the forget gate can take value at most $f_\infty < 0.64$.
Consequently, we enforce $\norm{W_i}_\infty, \norm{W_o}_\infty \leq 0.36$,
$\norm{W_z} \leq 0.091$, and $\norm{W_f}_\infty < \min\set{0.128, (1-0.64)^2}  =
0.128$.
\end{enumerate}
After 1-5 are enforced, by Proposition~\ref{prop:lstm-stability}, the resulting
(iterated)-LSTM is stable. Although the above description is somewhat
complicated, the implementation boils down to normalizing the rows of the LSTM
weight matrices, which can be done very efficiently in a few lines of PyTorch.

\paragraph{Data-dependent stability.}
Unlike the RNN, in an LSTM, it is not clear how to analytically compute the
stability parameter $\lambda$. Instead, we rely on a heuristic method to
estimate $\lambda$. Recall a model is stable if for all $x, h, h'$, we have
\begin{align}
    S(h, h', x) 
    := \frac{\norm{\phi_w(h, x) - \phi_w(h', x)}}{\norm{h - h'}}
    \leq \lambda < 1.
\end{align}
To estimate $\sup_{h, h', x} S(h, h', x)$, we do the following. First, we take
$x$ to be point in the training set. In the language modeling case, $x$ is one
of the learned word-vectors. We randomly sample and fix $x$, and then we perform
gradient ascent on $S(h, h', x)$ to find worst-case $h, h'$. In our experiments,
we initialize $h, h' \sim \normal{0}{0.1 \cdot I}$ and run gradient ascent with
learning rate $0.9$ for 1000 steps. This procedure is repeated 20 times, and we
estimate $\lambda$ as the maximum value of $S(h, h', x)$ encounted during any
iteration from any of the 20 random starting points.


\end{document}